\newcommand{\gr}{\rowcolor[gray]{.95}}
\newcommand{\wc}{\cellcolor{white}}
\newtheorem{theorem}{Theorem}[section]
\newtheorem{lemma}[theorem]{Lemma}
\theoremstyle{definition}
\renewcommand{\epsilon}{\varepsilon}
\renewcommand{\phi}{\varphi}
\newcommand*{\RN}[1]{\expandafter\@slowromancap\romannumeral #1@}
\newcommand{\printfnsymbol}[1]{%
  \textsuperscript{\@fnsymbol{#1}}%
}
\title{Determining Layer-wise Sparsity for Large Language Models Through a Theoretical Perspective}
\author[1]{Weizhong Huang}
\author[1]{Yuxin Zhang}
\author[1,2]{Xiawu Zheng}
\author[1]{Fei Chao}
\author[1,2]{Rongrong Ji\thanks{Corresponding Author: rrji@xmu.edu.cn.}}
\affil[1]{Key Laboratory of Multimedia Trusted Perception and Efficient Computing, Ministry of Education of China, Xiamen University, 361005, P.R. China.}
\affil[2]{Institute of Artificial Intelligence, Xiamen University.}
\affil[ ]{\texttt{\{huangwz,yuxinzhang\}@stu.xmu.edu.cn}, \texttt{\{zhengxiawu,fchao,rrji\}@xmu.edu.cn}}
\date{}
\begin{document}

\maketitle
\begin{abstract}
    In this paper, we address the challenge of determining the layer-wise sparsity rates of large language models (LLMs) through a theoretical perspective. Specifically, we identify a critical issue of \textbf{``reconstruction error explosion''} in existing LLMs sparsification methods. This refers to the cumulative effect of reconstruction errors throughout the sparsification process, where errors from earlier layers propagate and amplify in subsequent layers. As a result, the overall reconstruction error increases significantly, leading to a substantial degradation in model performance.
Through theoretical analysis, we derive a simple yet effective approach to layer-wise sparsity allocation that mitigates this issue. Our method uses a monotonically increasing arithmetic progression, reducing the process of determining sparsity rates for multiple layers to the determination of a single common difference hyperparameter. Remarkably, this allows for the optimal layer-wise sparsity rates to be identified with just a few trials. Both our theoretical analysis and experimental results demonstrate that this sparsity allocation scheme is near optimal.
Extensive experiments show that our method significantly improves the performance of sparse LLMs across various architectures, outperforming existing layer-wise sparsity methods. Furthermore, it enhances the performance of various compression techniques and is applicable to vision and multimodal models. Notably, our method achieves a reduction of 52.10 in perplexity for the 70$\%$ sparse LLaMA2-7B model obtained via Wanda, improves average zero-shot accuracy by 10.50$\%$, and delivers speedups of 2.63$\times$ and 2.23$\times$ on CPU and GPU, respectively.
\end{abstract}

\newpage
\clearpage
\section{Introduction}\label{sec:intro}
Large Language Models (LLMs) have demonstrated outstanding capabilities in various natural language processing tasks~\cite{llama3, yang2024qwen2, liu2024deepseek}. However, their vast number of parameters and high computational demands present significant challenges to model deployment, impeding further applications~\cite{zhu2024survey, wang2024model}. Network sparsity~\cite{rao2021dynamicvit, paul2022unmasking, wangntk} methods remove less important parameters from LLMs, enabling model compression without sacrificing performance. This can significantly reduce the model's memory footprint and computational complexity~\cite{li2024lorap, an2024fluctuation}. Existing sparsity methods for LLMs, such as SparseGPT~\cite{frantar2023sparsegpt} and Wanda~\cite{sun2023simple}, adopt a post-training approach which prune all weights in one-shot and can obtain sparse LLMs without the need for additional fine-tuning. 

However, these sparsity methods set a uniform layer-wise sparsity rate for different layers of LLMs, without considering the varying importance of each layer, which harms the accuracy of sparse LLMs. To address the above issue, many studies have proposed various methods to determine the layer-wise sparsity rate of LLMs. Based on their methodological designs, we categorize these approaches into two main groups:

\begin{figure*}[t]
    \centering
    \begin{minipage}[b]{0.3\textwidth}
        \centering
        \includegraphics[width=\textwidth]{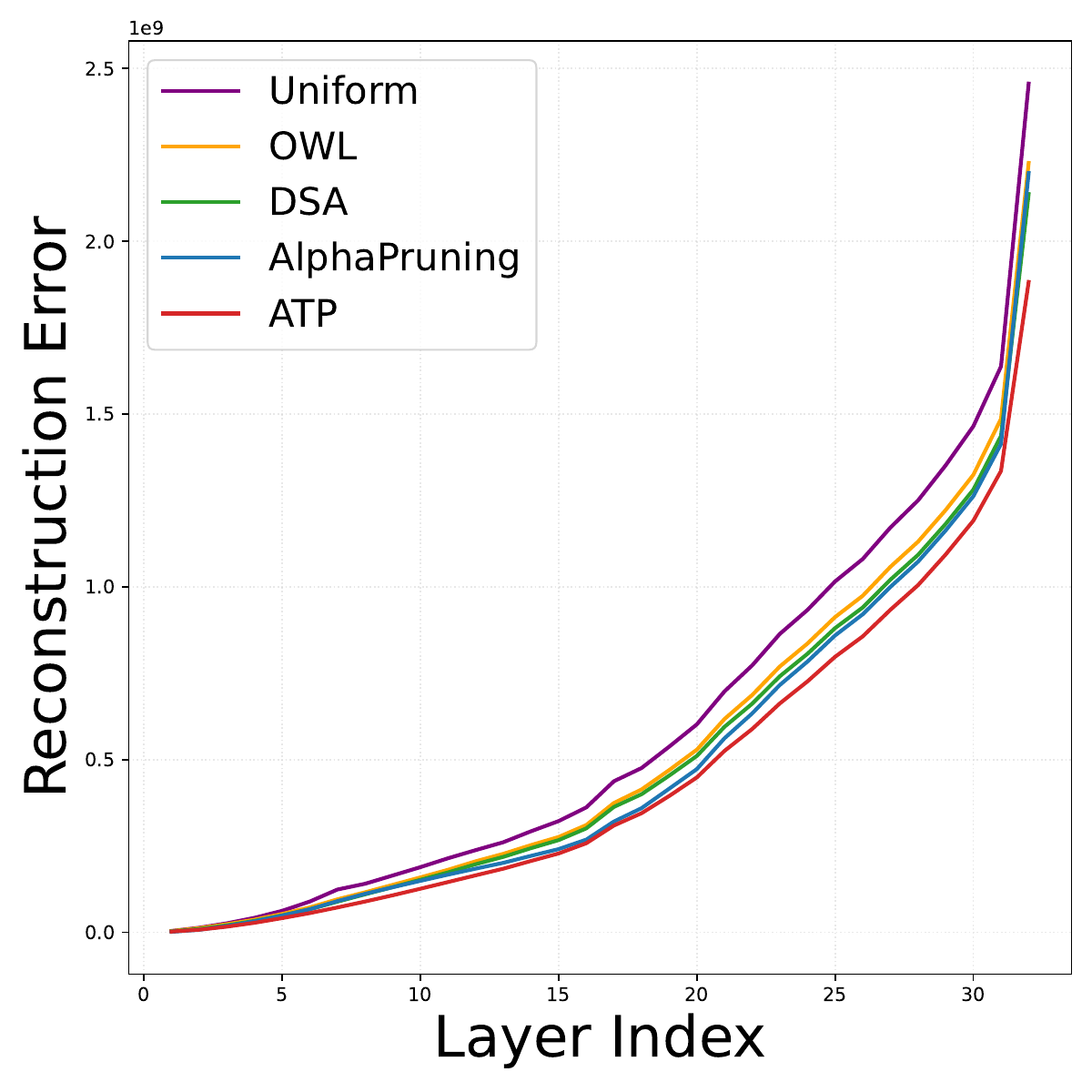}
    \end{minipage}
    \hfill
    \begin{minipage}[b]{0.69\textwidth}
        \centering
        \includegraphics[width=\textwidth]{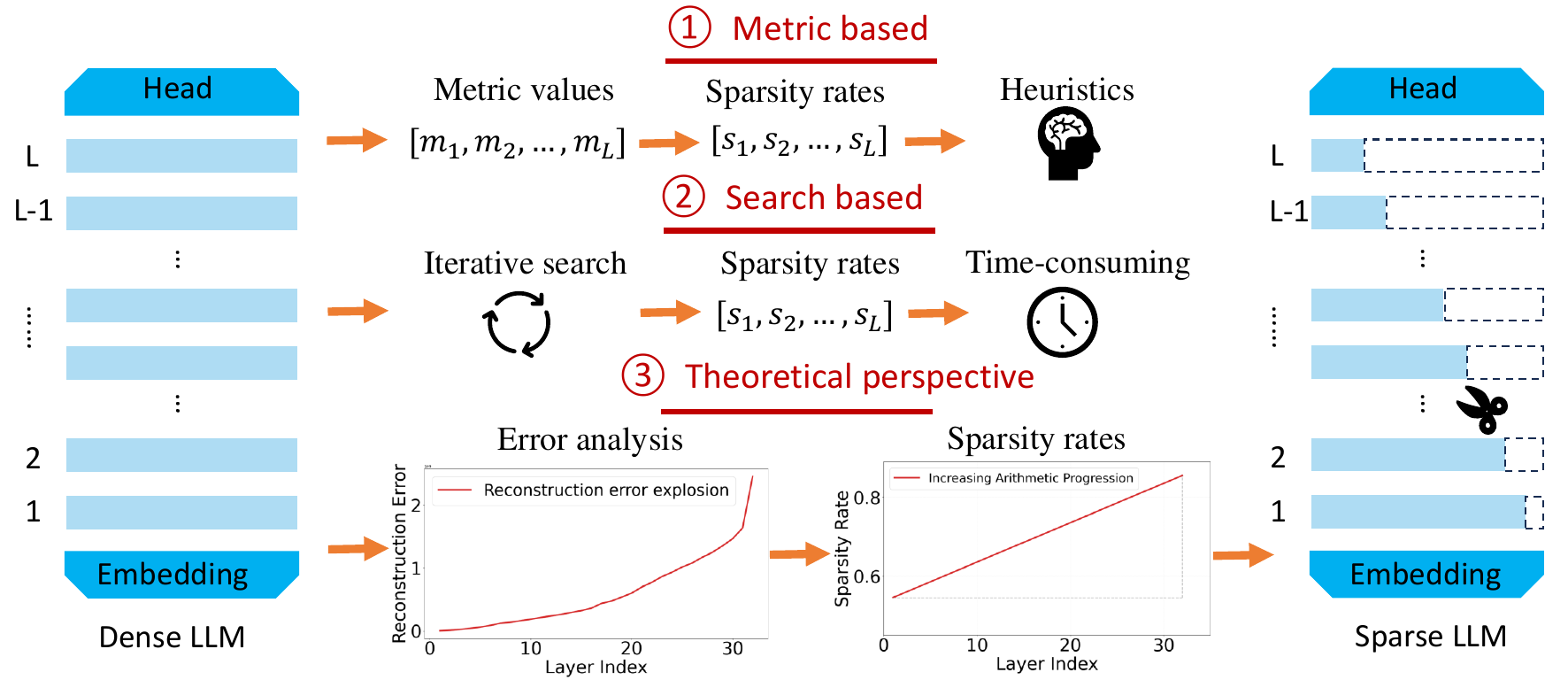}
    \end{minipage}
    \caption{(Left) shows the comparison of reconstruction error among different layer-wise sparsity methods. All methods face the problem of \textbf{``reconstruction error explosion''}; however, our method achieves lower reconstruction error compared to other methods. (Right) presents a comparison between our method and other layer-wise sparsity methods. The metric-based method calculates the importance of each layer to obtain the sparsity rate. However, this method is heuristically designed by human experts and is not optimal. And the search-based method requires a large number of iterative searches, which is time-consuming. In contrast, we analyze the causes of \textbf{``reconstruction error explosion''} from a theoretical perspective, and deduce theoretically that using a monotonically increasing arithmetic progression to determine the layer-wise sparsity rate can alleviate the problem of \textbf{``reconstruction error explosion''}.}
    \label{fig:framework}
\end{figure*}

\textbf{Metric based methods.} These methods determine the importance of each layer of LLMs through hand-crafted metrics, thereby obtaining the sparsity rate of each layer. For example, OWL~\cite{yin2023outlier} proposes an outlier weighted layer importance metric. By setting the sparsity rate to be inversely proportional to the outlier ratio, it effectively protects the layers with a higher ratio of outliers. AlphaPruning~\cite{lu2024alphapruning} utilizes the heavy-tailed self- regularization theory~\cite{martin2019traditional}, especially the shape of the empirical spectral density~\cite{martin2021predicting} of the weight matrix, to determine the importance of each layer of LLMs. ALS~\cite{li2024adaptive} proposes an importance metric based on mutual information~\cite{tschannen2019mutual}, and sets a higher sparsity rate for layers with higher mutual information. Although these metrics have proven their effectiveness experimentally, however manually designing metric requires extensive validation and complex calculations are needed to obtain the sparsity rate of each layer. And most importantly, most of these methods lack theoretical analysis, making it impossible to ensure that the solutions obtained are optimal.

\textbf{Search based methods.} In addition to these heuristics designed by humans, recently, there have also been some methods that adopt a search-based approach to determine the layer-wise sparsity rate of LLMs. For example, DSA~\cite{li2024discovering} develops an expression discovery framework to explore potential sparsity rate allocation strategies and obtains layer-wise sparsity allocation function by searching. However, the evolutionary search method employed by DSA requires 2000 iterations. For large-scale LLMs with a vast number of parameters, this demands a search process lasting several days, which incurs a significant cost. In addition, in order to obtain the final allocation function, DSA designs a complex search space and process, which means the effectiveness of the method heavily depends on the experience of human experts.

In this paper, we rethink the approach of determining the layer-wise sparsity rate of LLMs, and derive the layer-wise sparsity rate of LLMs from reconstruction error perspective. Specifically, we first prove that increasing the sparsity rate leads to an increase in the reconstruction error of the corresponding layer. Additionally, we show that an increase in the reconstruction error of one layer causes an increase in the reconstruction error of subsequent layers. This implies that increasing the sparsity rate of earlier layers not only increases the reconstruction error of the corresponding layer, but also leads to an increase in the reconstruction errors of all subsequent layers. As the network propagates forward, the reconstruction errors accumulate, causing the total reconstruction error to grow significantly, thus causing \textbf{``reconstruction error explosion''} (See the left in Figure \ref{fig:framework}).

Through the above theoretical analysis, we provide a simple yet effective rule for determining the layer-wise sparsity rates of LLMs: \textit{the sparsity rate should be lower in earlier layers, and the layer-wise sparsity rates should follow a increasing pattern}. This approach effectively alleviates the issue of \textbf{``reconstruction error explosion''}, resulting in a well-performing sparse LLM. To achieve this, we use a monotonically increasing arithmetic progression to determine the sparsity rates for all layers of LLMs and employ grid search to find the common difference of the arithmetic progression. Since the range of valid values for the common difference is narrow, our search is highly efficient, and after only a few attempts, we can determine the common difference that yields the best accuracy.

Furthermore, we prove that the total reconstruction error obtained from the monotonically increasing sparsity scheme is less than that of any non-monotonically increasing sparsity scheme. This indicates that our method is theoretically close to the optimal solution. Additionally, we compare our sparsity rate scheme with the optimal solution obtained through Bayesian search~\cite{wu2019hyperparameter}, we find that our scheme is close to the optimal solution found by search. This indicates that our method is empirically close to the optimal solution.

To evaluate the effectiveness of our ATP \footnote{Determining layer-wise sparsity for LLMs through \textbf{A} \textbf{T}heoretical \textbf{P}erspective (\textbf{ATP}).} method, we conduct extensive experiments on LLMs of various architectures, with parameter counts ranging from 6.7 billion to 70 billion. Our evaluation metrics include perplexity, average accuracy across seven zero-shot datasets, and performance on arithmetic and knowledge reasoning tasks. Our ATP method demonstrate substantial improvements over existing post-training sparsity techniques, significantly surpassing other layer-wise sparsity methods. Notably, ATP reduce the perplexity of the 70$\%$ sparse LLaMA2-7B pruned using Wanda~\cite{sun2023simple} by 52.10 and increase average zero-shot accuracy by 10.50$\%$, outperforming the state-of-the-art AlphaPruning method~\cite{lu2024alphapruning} by 6.71 and 2.46$\%$, respectively. Additionally, ATP achieved 2.63$\times$ and 2.23$\times$ speedups on CPU and GPU, respectively, and require only $18$ minutes to compute layer-wise sparsity rates. Furthermore, we evaluate ATP's enhancements on various compression techniques, including N:M sparsity, structured pruning, and network quantization, as well as its benefits for sparse multimodal and sparse vision models. These experimental results clearly demonstrate that ATP provides substantial performance improvements for compressed models.
\section{Related Work}\label{RelatedWork}
\paragraph{LLMs Sparsity.} 
Before the advent of LLMs, a variety of sparsity techniques had been developed for compressing models such as ResNet \citep{yu2022combinatorial, zhang2024how} , BERT \citep{xia2022structured, li2023losparse}, and ViT \citep{yu2022width, he2024pruning}. Meanwhile, researchers have developed several post-training sparsity methods specifically for LLMs. For example, SparseGPT \citep{frantar2023sparsegpt} uses the inverse of the Hessian matrix for pruning and pruned weight updates. Wanda \citep{sun2023simple} uses a metric that combines weight magnitude and input activation to prune LLMs, while Pruner-zero \citep{dong2024pruner} searches for symbolic pruning metric using genetic programming. Additionally, ALPS \citep{meng2024alps} uses an Alternating Direction Method of Multiplier (ADMM) \citep{boyd2011distributed}-based approach to prune LLMs in one-shot. The above methods focus on determining the mask within the layer of LLMs and setting a uniform layer-wise sparsity rate. Our work study the layer-wise sparsity allocation problem in sparse LLMs from the perspective of reconstruction error, thereby effectively improving the accuracy of above methods.
\paragraph{Layer-wise Sparsity.} Layer-wise sparsity rate determines the number of weights to be retained in each layer of the network \citep{lee2020layer, liu2022unreasonable}. To determine the layer-wise sparsity rate in LLMs, OWL\citep{yin2023outlier} proposes an outlier-weighted metric, Alphapruning \citep{lu2024alphapruning} uses heavy-tailed self-regularization theory \citep{martin2019traditional}, ALS \citep{li2024adaptive} proposes a layer redundancy metric based on mutual information \citep{kraskov2004estimating}, DSA \citep{li2024discovering} develops an expression discovery algorithm to explore potential sparsity allocation. However, the above metric and search based methods all lack theoretical proof of effectiveness and require complex calculations or search to obtain layer-wise sparsity rate. In comparison, our method directly derives the layer-wise sparsity rates based on a monotonically increasing arithmetic progression from the perspective of reconstruction error, requiring only the determination of the common difference to quickly obtain the sparsity rate for each layer. More importantly, we have validated the effectiveness of the above method through rigorous proof.

\paragraph{Reconstruction Error.} Reconstruction error is a metric for measuring the difference between the output of a compressed network and that of the original network. A smaller reconstruction error generally implies that the compressed network can better preserve the performance of the original network \citep{yun2021all, hubara2021accelerated, ma2023solving}. In order to minimize the reconstruction error of sparse LLMs, SparseGPT \citep{frantar2023sparsegpt} proposes a mask selection and weight update algorithm based on Hessian inverse and DSnoT \citep{zhang2023dynamic} proposes a training-free dynamic weight pruning and growing algorithm. In this paper, we explore the \textbf{``reconstruction error explosion''} problem in sparse LLMs. Specifically, the reconstruction error accumulates and magnifies across layers, leading to an extremely large overall reconstruction error, which undermines the model's accuracy. Therefore, we propose our layer-wise sparsity allocation method to alleviate the above \textbf{``reconstruction error explosion''} problem.
\section{Methodology}
\paragraph{Notation.} In this paper, we use bold typeface indicates  matrices (\emph{e.g.,}$\boldsymbol{W, X}$) and calligraphic font represents loss functions or models (\emph{e.g.,}$\mathcal{L}, \mathcal{M}$). 

\subsection{Preliminaries}
Without loss of generality, we take each layer in the LLMs as the basic unit of analysis. These layers contain modules such as Attention \citep{vaswani2017attention}, MLP \citep{popescu2009multilayer}, LayerNorm \citep{lei2016layer}, and residual connections \citep{he2016deep}, etc. Consider an LLM composed of $L$ layers, we define the reconstruction error of the $i$-th layer ($i = 1, 2, \cdots, L$) as follows:
\begin{equation}
    \mathcal{L}(\boldsymbol{W}_i, \boldsymbol{X}_i) = \left\lVert \boldsymbol{W}_i\boldsymbol{X}_i - \widetilde{\boldsymbol{W}}_{i}\widetilde{\boldsymbol{X}}_{i}\right\rVert^2_F
\end{equation}
where \(\boldsymbol{W}_i \in \mathbb{R}^{c_{out}\times c_{in}}\) and \(\boldsymbol{X}_i \in \mathbb{R}^{c_{in}\times d}\) are the weights and input of the $i$-th layer respectively. \(\widetilde{\boldsymbol{W}}_{i}\) and \(\widetilde{\boldsymbol{X}}_{i}\) are the corresponding sparse versions, where \(c_{in}\) and \(c_{out}\) represent the number of input and output channels, \(d\) is the hidden dimension, and \(\left\lVert \cdot \right\rVert_F\) is Frobenius norm. 

Existing post-training sparsity methods all attempt to minimize the reconstruction error of sparse LLMs (\emph{e.g.,} SparseGPT and Wanda) and a large amount of experimental evidence in the these papers indicates that a sparse LLM with good accuracy often has a low reconstruction error. 

In the next section, we reveal that the existing post-training sparsity methods all have the problem of \textbf{``reconstruction error explosion''}.
\subsection{\textbf{``Reconstruction Error Explosion''
} in Sparse LLMs}\label{sec:error_analysis}
First, we analyze the relationship between the sparsity rate and the reconstruction error. Briefly, a higher sparsity rate leads to a higher reconstruction error. Formally, we propose Theorem \ref{theorem1}.
\begin{theorem}[\textbf{Effect of increased sparsity on reconstruction error}]
\label{theorem1}
Increasing the sparsity of the weights in the \(i\)-th layer will lead to an increase in the reconstruction error of this layer.
\end{theorem}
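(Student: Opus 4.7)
The plan is to reformulate the sparsification of layer $i$ as a cardinality-constrained minimization with fixed input, and then reduce the theorem to the standard fact that a minimum taken over a smaller feasible set cannot decrease. Since the perturbation under consideration only alters the mask of the $i$-th layer, the activations $\widetilde{\boldsymbol{X}}_i$ entering this layer are unchanged (they are determined by the already-pruned earlier layers and are therefore fixed when we restrict attention to this layer). Writing $\widetilde{\boldsymbol{W}}_i = \boldsymbol{M}_i \odot \boldsymbol{W}_i$ for a binary mask $\boldsymbol{M}_i \in \{0,1\}^{c_{\text{out}} \times c_{\text{in}}}$, the sparsity rate $s_i$ translates into the cardinality constraint $\|\boldsymbol{M}_i\|_0 \leq (1-s_i)\,c_{\text{out}} c_{\text{in}}$, and the smallest per-layer reconstruction error achievable at rate $s_i$ is
\[
\mathcal{L}^\star(s_i) \;=\; \min_{\|\boldsymbol{M}_i\|_0 \leq (1-s_i)\,c_{\text{out}}c_{\text{in}}} \bigl\| \boldsymbol{W}_i \boldsymbol{X}_i - (\boldsymbol{M}_i \odot \boldsymbol{W}_i)\widetilde{\boldsymbol{X}}_i \bigr\|_F^2.
\]

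The key step is a feasible-set inclusion. If $s_i' > s_i$, then $(1 - s_i')\,c_{\text{out}}c_{\text{in}} < (1 - s_i)\,c_{\text{out}}c_{\text{in}}$, so every mask admissible at the higher sparsity rate is also admissible at the lower one. Because the objective is identical and is being minimized over a subset, we obtain $\mathcal{L}^\star(s_i') \geq \mathcal{L}^\star(s_i)$, which is exactly the claim for any procedure that minimizes the per-layer reconstruction error within its cardinality budget. For the score-based heuristics that appear in the experiments (magnitude, Wanda, SparseGPT), I would additionally verify that raising $s_i$ only appends new indices to the pruned set. For magnitude and Wanda this is immediate, since the pruned entries are the bottom-$k$ of a fixed importance score, so the reconstruction error, being a non-negative sum of contributions from pruned coordinates, is monotone in $k$. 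For SparseGPT, an OBS-style formula attaches a non-negative increment $\Delta\mathcal{L} = w_q^2/[\boldsymbol{H}^{-1}]_{qq}$ to each pruning step, and summing more such increments again yields a monotone bound.

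The main obstacle I anticipate is pinning down SparseGPT-type methods where post-pruning weight updates change the matrix on which subsequent decisions act, so the comparison at two sparsity rates is not literally a subset argument. The cleanest workaround is to argue at the level of incremental error decrements rather than at the level of $\mathcal{L}^\star$ directly: each prune-and-update round contributes a non-negative quadratic-form term to the accumulated error, so more rounds force a weakly larger total. Since Theorem \ref{theorem1} is invoked later only as a qualitative building block for the monotone arithmetic-progression schedule, the clean feasibility-set argument above is very likely the intended proof and is sufficient for the paper's subsequent development.
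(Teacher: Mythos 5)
Your argument is genuinely different from the paper's. The paper compares two sparse weights $\widetilde{\boldsymbol{W}}_i^{(1)}$ (less sparse) and $\widetilde{\boldsymbol{W}}_i^{(2)}$ (more sparse) directly, expands
\begin{equation*}
\mathcal{L}^{(1)}-\mathcal{L}^{(2)} = -2\bigl\langle(\boldsymbol{W}_i - \widetilde{\boldsymbol{W}}_{i}^{(1)})\boldsymbol{X}_{i},\ (\widetilde{\boldsymbol{W}}_{i}^{(1)} - \widetilde{\boldsymbol{W}}_{i}^{(2)})\boldsymbol{X}_{i} \bigr\rangle_F - \bigl\lVert (\widetilde{\boldsymbol{W}}_{i}^{(1)} - \widetilde{\boldsymbol{W}}_{i}^{(2)})\boldsymbol{X}_{i}\bigr\rVert^2_F,
\end{equation*}
and then asserts that the cross inner product is positive because both error terms are ``generated by the same sparsification method'' and hence point in similar directions. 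That positivity is an assumption, not a derivation. Your feasible-set inclusion avoids this entirely: for the \emph{optimal} per-layer error $\mathcal{L}^\star(s_i)$ under a cardinality budget, monotonicity in $s_i$ is an immediate and fully rigorous consequence of minimizing over nested feasible sets. In the regime where the theorem is used (as a qualitative monotonicity fact feeding into Theorem 4), your version is cleaner and arguably the ``right'' statement to prove.

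There is, however, a gap in the step where you extend the claim from $\mathcal{L}^\star$ to the error actually produced by magnitude or Wanda. You assert this is ``immediate, since the pruned entries are the bottom-$k$ of a fixed importance score, so the reconstruction error, being a non-negative sum of contributions from pruned coordinates, is monotone in $k$.'' It is not a sum of non-negative per-coordinate contributions. Writing the pruned part at the lower rate as $\boldsymbol{A}$ and the additionally pruned part as $\boldsymbol{B}$ (disjoint supports), the error at the higher rate is $\lVert(\boldsymbol{A}+\boldsymbol{B})\boldsymbol{X}_i\rVert_F^2 = \lVert\boldsymbol{A}\boldsymbol{X}_i\rVert_F^2 + \lVert\boldsymbol{B}\boldsymbol{X}_i\rVert_F^2 + 2\langle\boldsymbol{A}\boldsymbol{X}_i,\boldsymbol{B}\boldsymbol{X}_i\rangle_F$, and the cross term involves $\boldsymbol{X}_i\boldsymbol{X}_i^{\top}$ off-diagonal entries and can be negative; disjoint support of $\boldsymbol{A}$ and $\boldsymbol{B}$ does not give orthogonality of $\boldsymbol{A}\boldsymbol{X}_i$ and $\boldsymbol{B}\boldsymbol{X}_i$. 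This is precisely the same cross term the paper disposes of by its ``similar directions'' heuristic, so at the point where your argument must touch the actual heuristic pruners rather than the idealized minimizer, you need either the same sign assumption the paper makes or an explicit restriction of the theorem to budget-constrained error minimizers. Flagging that assumption explicitly would make your proof strictly stronger than the paper's; as written, the Wanda/magnitude clause quietly inherits the paper's gap.
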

\begin{proof}[Proof of Theorem \ref{theorem1}]
We only consider the effect of sparse weights on the reconstruction error, and we ignore the error caused by the input at this time. Therefore, we consider the impact of the sparse weights of the \(i\)-th layer on the reconstruction error when the input is the same. That is, \(\widetilde{\boldsymbol{X}}_{i}=\boldsymbol{X}_{i}\). Then, the reconstruction error of the \(i\)-th layer is expressed as:
\begin{equation}
    \mathcal{L} = \left\lVert \boldsymbol{W}_i\boldsymbol{X}_i - \widetilde{\boldsymbol{W}}_{i}\boldsymbol{X}_{i}\right\rVert^2_F
\end{equation}
Consider two weights \(\widetilde{\boldsymbol{W}}_{i}^{(1)}\) and \(\widetilde{\boldsymbol{W}}_{i}^{(2)}\) of different sparsity, such that \(\widetilde{\boldsymbol{W}}_{i}^{(1)}\) has lower sparsity (i.e., has fewer zero elements). The difference in reconstruction error corresponding to these two sparse weights is:
\begin{equation}
\begin{aligned}
& \mathcal{L}^{(1)}-\mathcal{L}^{(2)} = \left\lVert \boldsymbol{W}_i\boldsymbol{X}_i - \widetilde{\boldsymbol{W}}_{i}^{(1)}\boldsymbol{X}_{i}\right\rVert^2_F\\
& - \left\lVert \boldsymbol{W}_i\boldsymbol{X}_i - \widetilde{\boldsymbol{W}}_{i}^{(2)}\boldsymbol{X}_{i}\right\rVert^2_F
= \left\lVert \boldsymbol{W}_i\boldsymbol{X}_i - \widetilde{\boldsymbol{W}}_{i}^{(1)}\boldsymbol{X}_{i}\right\rVert^2_F\\
& - \left\lVert (\boldsymbol{W}_i\boldsymbol{X}_i - \widetilde{\boldsymbol{W}}_{i}^{(1)}\boldsymbol{X}_{i})+(\widetilde{\boldsymbol{W}}_{i}^{(1)}\boldsymbol{X}_{i} - \widetilde{\boldsymbol{W}}_{i}^{(2)}\boldsymbol{X}_{i})\right\rVert^2_F \\
& = -2\langle(\boldsymbol{W}_i - \widetilde{\boldsymbol{W}}_{i}^{(1)})\boldsymbol{X}_{i}, (\widetilde{\boldsymbol{W}}_{i}^{(1)} - \widetilde{\boldsymbol{W}}_{i}^{(2)})\boldsymbol{X}_{i} \rangle_F \\
& - \left\lVert \widetilde{\boldsymbol{W}}_{i}^{(1)}\boldsymbol{X}_{i} - \widetilde{\boldsymbol{W}}_{i}^{(2)}\boldsymbol{X}_{i}\right\rVert^2_F\\
\end{aligned}
\end{equation}

The first term of the inner product, \((\boldsymbol{W}_i - \widetilde{\boldsymbol{W}}_{i}^{(1)})\boldsymbol{X}_{i}\), represents the error introduced by sparsifying the original weight matrix \(\boldsymbol{W}_i\) to obtain a less sparse version \(\widetilde{\boldsymbol{W}}_{i}^{(1)}\). The second term, \((\widetilde{\boldsymbol{W}}_{i}^{(1)} - \widetilde{\boldsymbol{W}}_{i}^{(2)})\boldsymbol{X}_{i}\), quantifies the additional error resulting from further sparsifying \(\widetilde{\boldsymbol{W}}_{i}^{(1)}\) to derive an even sparser matrix \(\widetilde{\boldsymbol{W}}_{i}^{(2)}\). Since both errors \((\boldsymbol{W}_i - \widetilde{\boldsymbol{W}}_{i}^{(1)})\boldsymbol{X}_{i}\) and \((\widetilde{\boldsymbol{W}}_{i}^{(1)} - \widetilde{\boldsymbol{W}}_{i}^{(2)})\boldsymbol{X}_{i}\) are generated through the same sparsification method, they point in similar directions within the vector space. This alignment ensures that their Frobenius inner product satisfies \(\langle(\boldsymbol{W}_i - \widetilde{\boldsymbol{W}}_{i}^{(1)})\boldsymbol{X}_{i}, (\widetilde{\boldsymbol{W}}_{i}^{(1)} - \widetilde{\boldsymbol{W}}_{i}^{(2)})\boldsymbol{X}_{i} \rangle_F > 0\). And since \(\left\lVert \widetilde{\boldsymbol{W}}_{i}^{(1)}\boldsymbol{X}_{i} - \widetilde{\boldsymbol{W}}_{i}^{(2)}\boldsymbol{X}_{i}\right\rVert^2_F>0 \), therefore:
\begin{equation}
\mathcal{L}^{(1)}-\mathcal{L}^{(2)} < 0
\end{equation}
Therefore, the reconstruction error $\mathcal{L}^{(1)}$ for lower sparsity is smaller than $\mathcal{L}^{(2)}$ for higher sparsity. In other words, increasing the sparsity will lead to an increase in the reconstruction error of this layer. So, we have completed the proof of the above theorem.
\end{proof}

On the other hand, we find that there is an cumulative effect of reconstruction error during the sparsification process. It is manifested as the reconstruction error in LLMs showing an increasing trend due to the influence of previous sparse layers. In other words, if the reconstruction error of the previous layer increases, the reconstruction error of the subsequent layers will also increase accordingly. Formally, we propose Theorem \ref{theorem2}. 
\begin{theorem}[\textbf{The cumulative effect of reconstruction error}]\label{theorem2}
When the reconstruction error of the \(i\)-th layer increases, it leads to an increase in the lower bound of the reconstruction error for the \((i+1)\)-th layer.
\end{theorem}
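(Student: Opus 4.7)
The plan is to lower-bound $\mathcal{L}(\boldsymbol{W}_{i+1},\boldsymbol{X}_{i+1})$ by a quantity that depends \emph{monotonically} on $\mathcal{L}(\boldsymbol{W}_i,\boldsymbol{X}_i)$, so that any increase in the $i$-th layer's reconstruction error forces the lower bound on the $(i+1)$-th to grow. The starting point is a ``split'' of the $(i+1)$-th layer error into a pure-weight part and a pure-input part. Concretely, I would insert $\widetilde{\boldsymbol{W}}_{i+1}\boldsymbol{X}_{i+1}$ inside the Frobenius norm, obtaining
\begin{equation}
\boldsymbol{W}_{i+1}\boldsymbol{X}_{i+1}-\widetilde{\boldsymbol{W}}_{i+1}\widetilde{\boldsymbol{X}}_{i+1}
= (\boldsymbol{W}_{i+1}-\widetilde{\boldsymbol{W}}_{i+1})\boldsymbol{X}_{i+1} + \widetilde{\boldsymbol{W}}_{i+1}(\boldsymbol{X}_{i+1}-\widetilde{\boldsymbol{X}}_{i+1}).
\end{equation}
Treating the first summand as ``fixed'' (since it depends only on the sparsification of layer $i+1$) and the second as the part through which earlier error propagates, I would apply the reverse triangle inequality to get
\begin{equation}
\sqrt{\mathcal{L}(\boldsymbol{W}_{i+1},\boldsymbol{X}_{i+1})} \;\geq\; \bigl\lVert \widetilde{\boldsymbol{W}}_{i+1}(\boldsymbol{X}_{i+1}-\widetilde{\boldsymbol{X}}_{i+1})\bigr\rVert_F \;-\; \bigl\lVert (\boldsymbol{W}_{i+1}-\widetilde{\boldsymbol{W}}_{i+1})\boldsymbol{X}_{i+1}\bigr\rVert_F.
\end{equation}

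Next I would bound the propagation term from below using the smallest nonzero singular value of $\widetilde{\boldsymbol{W}}_{i+1}$, giving $\lVert \widetilde{\boldsymbol{W}}_{i+1}(\boldsymbol{X}_{i+1}-\widetilde{\boldsymbol{X}}_{i+1})\rVert_F \geq \sigma_{\min}(\widetilde{\boldsymbol{W}}_{i+1}) \, \lVert \boldsymbol{X}_{i+1}-\widetilde{\boldsymbol{X}}_{i+1}\rVert_F$. The remaining and central step is to link the input discrepancy $\lVert \boldsymbol{X}_{i+1}-\widetilde{\boldsymbol{X}}_{i+1}\rVert_F$ to the $i$-th layer's reconstruction error. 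Since $\boldsymbol{X}_{i+1}$ and $\widetilde{\boldsymbol{X}}_{i+1}$ are obtained from $\boldsymbol{W}_i\boldsymbol{X}_i$ and $\widetilde{\boldsymbol{W}}_i\widetilde{\boldsymbol{X}}_i$ respectively through the same intermediate operations (residual add, LayerNorm, activation), I would invoke a bi-Lipschitz-type property of that composed map to conclude
\begin{equation}
\bigl\lVert \boldsymbol{X}_{i+1}-\widetilde{\boldsymbol{X}}_{i+1}\bigr\rVert_F \;\geq\; c \, \sqrt{\mathcal{L}(\boldsymbol{W}_i,\boldsymbol{X}_i)}
\end{equation}
for some constant $c>0$ depending only on the fixed architectural blocks between layers $i$ and $i+1$. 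This would be justified via the fact that LayerNorm has a bounded-below derivative away from degenerate inputs and that residual connections are 1-Lipschitz in both directions.

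Combining the three inequalities yields a lower bound of the form $\sqrt{\mathcal{L}_{i+1}} \geq c \, \sigma_{\min}(\widetilde{\boldsymbol{W}}_{i+1})\sqrt{\mathcal{L}_i} - A_{i+1}$, where $A_{i+1}$ is independent of $\mathcal{L}_i$. Hence when $\mathcal{L}_i$ increases, this lower bound strictly increases, which is exactly the conclusion of the theorem. I expect the hardest step to be the third one: rigorously establishing the lower Lipschitz constant $c$ for the composed inter-layer map. A clean proof may require restricting attention to the regime where LayerNorm is non-degenerate (so its Jacobian has a positive least singular value on the relevant inputs) and formally absorbing the nonlinearity; in a less formal exposition, one can instead appeal to the observed empirical near-linearity of inter-block transformations, but the theorem's strict monotonicity claim really rests on this step.
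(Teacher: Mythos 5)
Your proof follows the same skeleton as the paper's: the same splitting of $\boldsymbol{W}_{i+1}\boldsymbol{X}_{i+1}-\widetilde{\boldsymbol{W}}_{i+1}\widetilde{\boldsymbol{X}}_{i+1}$ into a weight-error term and an input-propagation term, followed by the same smallest-singular-value lower bound (the paper's Lemma \ref{lemma1}) on $\lVert\widetilde{\boldsymbol{W}}_{i+1}(\boldsymbol{X}_{i+1}-\widetilde{\boldsymbol{X}}_{i+1})\rVert_F$. One place where you genuinely improve on the paper: you use the reverse triangle inequality to isolate the propagation term, whereas the paper expands the square and discards the cross term with a heuristic ``the first two terms dominate'' argument; your route avoids that unjustified approximation entirely, at the mild cost of a subtractive constant $A_{i+1}$ in the final bound.

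The genuine gap is your third step. The paper never needs a bi-Lipschitz property of the inter-layer map, because it treats each ``layer'' as the entire transformer block (attention, MLP, LayerNorm, residual all folded into the abstract operator $\boldsymbol{W}_i$), so that by definition $\boldsymbol{X}_{i+1}=\boldsymbol{W}_i\boldsymbol{X}_i$ and $\widetilde{\boldsymbol{X}}_{i+1}=\widetilde{\boldsymbol{W}}_i\widetilde{\boldsymbol{X}}_i$; the identification $\lVert\boldsymbol{X}_{i+1}-\widetilde{\boldsymbol{X}}_{i+1}\rVert_F^2=\mathcal{L}(\boldsymbol{W}_i,\boldsymbol{X}_i)$ is then an identity, not an inequality, and no constant $c$ is needed. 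By modeling the intermediate operations explicitly you have made the step honest but also unprovable as stated: LayerNorm is invariant under positive rescaling of its input, so its Jacobian has a zero singular value in the radial direction and no global lower Lipschitz constant $c>0$ exists --- two outputs of layer $i$ that differ by a scalar multiple produce identical inputs to layer $i+1$, forcing $c=0$. You correctly flag this as the hardest step, but it is not merely hard; without restricting to a non-degenerate input regime (and quantifying it), the claimed inequality $\lVert\boldsymbol{X}_{i+1}-\widetilde{\boldsymbol{X}}_{i+1}\rVert_F\geq c\sqrt{\mathcal{L}(\boldsymbol{W}_i,\boldsymbol{X}_i)}$ is false in general. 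To close the gap you should either adopt the paper's convention that the layer output is literally the next layer's input, or add an explicit assumption bounding the conditioning of the inter-block map on the inputs actually encountered.
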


To prove Theorem \ref{theorem2}, we need to define the following lemma:
\begin{lemma}\label{lemma1} Let \(\boldsymbol{A} \in \mathbb{R}^{m \times n}\) and \(\boldsymbol{B} \in \mathbb{R}^{n \times p}\) be arbitrary matrices. Then, it holds that \( \left\lVert \boldsymbol{AB} \right\rVert_F^2 \geq \sigma_{\min}^2(\boldsymbol{A}) \left\lVert \boldsymbol{B} \right\rVert_F^2\), where \(\sigma_{\min}(\boldsymbol{A})\) denotes the smallest non-zero singular value of \(\boldsymbol{A}\).

\end{lemma}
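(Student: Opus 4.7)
The plan is to prove the bound via the singular value decomposition of $\boldsymbol{A}$ together with the orthogonal invariance of the Frobenius norm. First I would write $\boldsymbol{A}=\boldsymbol{U}\boldsymbol{\Sigma}\boldsymbol{V}^{T}$, where $\boldsymbol{U}\in\R^{m\times m}$ and $\boldsymbol{V}\in\R^{n\times n}$ are orthogonal and $\boldsymbol{\Sigma}\in\R^{m\times n}$ carries the singular values $\sigma_{1}\geq\cdots\geq\sigma_{r}>0=\sigma_{r+1}=\cdots$, with $r=\Rank(\boldsymbol{A})$. Since the Frobenius norm is invariant under left- or right-multiplication by an orthogonal matrix, I would reduce to $\|\boldsymbol{A}\boldsymbol{B}\|_{F}^{2}=\|\boldsymbol{\Sigma}\boldsymbol{C}\|_{F}^{2}$ and $\|\boldsymbol{B}\|_{F}^{2}=\|\boldsymbol{C}\|_{F}^{2}$, where $\boldsymbol{C}:=\boldsymbol{V}^{T}\boldsymbol{B}$. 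The remaining claim is then purely about the diagonal action of $\boldsymbol{\Sigma}$ on $\boldsymbol{C}$, which is the easy part.

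Next I would expand row by row. Writing $\boldsymbol{c}_{i}$ for the $i$-th row of $\boldsymbol{C}$, the $i$-th row of $\boldsymbol{\Sigma}\boldsymbol{C}$ equals $\sigma_{i}\boldsymbol{c}_{i}$ for $i\leq r$ and vanishes for $i>r$. Therefore
\begin{equation*}
\|\boldsymbol{\Sigma}\boldsymbol{C}\|_{F}^{2}=\sum_{i=1}^{r}\sigma_{i}^{2}\|\boldsymbol{c}_{i}\|_{2}^{2}\;\geq\;\sigma_{r}^{2}\sum_{i=1}^{r}\|\boldsymbol{c}_{i}\|_{2}^{2}=\sigma_{\min}^{2}(\boldsymbol{A})\sum_{i=1}^{r}\|\boldsymbol{c}_{i}\|_{2}^{2},
\end{equation*}
where the single inequality is just $\sigma_{i}^{2}\geq\sigma_{r}^{2}=\sigma_{\min}^{2}(\boldsymbol{A})$ for each surviving index. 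The gap between this partial sum and $\|\boldsymbol{C}\|_{F}^{2}=\sum_{i=1}^{n}\|\boldsymbol{c}_{i}\|_{2}^{2}$ is precisely the squared Frobenius norm of the projection of $\boldsymbol{B}$ onto $\ker(\boldsymbol{A})$.

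The main obstacle, and the only delicate point, is exactly this residual: any component of $\boldsymbol{B}$ lying in $\ker(\boldsymbol{A})$ shrinks $\|\boldsymbol{A}\boldsymbol{B}\|_{F}$ without affecting $\|\boldsymbol{B}\|_{F}$, so a literal bound by \emph{any} positive singular value cannot hold without some restriction on $\boldsymbol{B}$. The clean way to close the argument, and the one I would adopt given the intended usage inside Theorem \ref{theorem2}, is to observe that the matrix $\boldsymbol{B}$ appearing there is a difference of activations/weights that lies in the row-space of $\boldsymbol{A}$, equivalently in the column-space of the leading $r$ columns $\boldsymbol{V}_{r}$ of $\boldsymbol{V}$. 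Under that mild assumption $\boldsymbol{c}_{i}=\boldsymbol{0}$ for $i>r$, so the residual vanishes and the partial sum coincides with $\|\boldsymbol{C}\|_{F}^{2}=\|\boldsymbol{B}\|_{F}^{2}$, yielding the claimed inequality. Alternatively, one may interpret the hypothesis as asserting that $\boldsymbol{A}$ has full column rank, i.e.\ $r=n$, in which case the residual index set is empty and no further assumption is needed. Either way, once this null-space issue is dispatched, the bound follows immediately from the row-wise estimate displayed above.
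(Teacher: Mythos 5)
Your route is the same as the paper's: SVD of $\boldsymbol{A}$, orthogonal invariance of the Frobenius norm to reduce to $\|\boldsymbol{\Sigma}\boldsymbol{C}\|_F^2$ with $\boldsymbol{C}=\boldsymbol{V}^{T}\boldsymbol{B}$, then a row-wise comparison of singular values. The difference is that you carry the row-wise expansion out honestly, and in doing so you have put your finger on a real defect that the paper's own proof glosses over. The paper asserts ``$\boldsymbol{\Sigma}\boldsymbol{X}\geq\sigma_{\min}\boldsymbol{X}$ in the sense that each singular value scales the corresponding component,'' but when $\boldsymbol{A}$ is rank-deficient the rows of $\boldsymbol{C}$ indexed by $i>r$ are annihilated rather than scaled by $\sigma_{\min}$, and since the lemma defines $\sigma_{\min}(\boldsymbol{A})$ as the smallest \emph{non-zero} singular value, the claimed inequality is simply false for arbitrary $\boldsymbol{B}$: take $\boldsymbol{A}=\mathrm{diag}(1,0)$ and $\boldsymbol{B}=(0,1)^{T}$, so that $\boldsymbol{A}\boldsymbol{B}=\boldsymbol{0}$, $\|\boldsymbol{B}\|_F=1$, yet $\sigma_{\min}(\boldsymbol{A})=1$. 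Your residual term $\sum_{i>r}\|\boldsymbol{c}_i\|_2^2$ is exactly the squared norm of the component of $\boldsymbol{B}$ in $\ker(\boldsymbol{A})$, and the two repairs you propose --- either assuming the columns of $\boldsymbol{B}$ lie in the row space of $\boldsymbol{A}$ (plausible for the activation differences in Theorem \ref{theorem2}, though the paper never verifies it, and it is not automatic for a sparsified $\widetilde{\boldsymbol{W}}_{i+1}$), or assuming $\boldsymbol{A}$ has full column rank so that $r=n$ --- are the standard ways to make the statement true. So: same approach, but your version is the correct one; the paper's proof as written does not establish the lemma in the generality stated, and the lemma's hypothesis should be amended along one of the lines you indicate.
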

The proof of the Lemma \ref{lemma1} can be found in Appendix \ref{sec:provelemma}. Now we formally prove Theorem \ref{theorem2}.

\begin{proof}[Proof of Theorem \ref{theorem2}]
The reconstruction error for the \((i+1)\)-th layer can be expressed as:
\begin{equation}
\begin{aligned}
&\mathcal{L}(\boldsymbol{W}_{i+1}, \boldsymbol{X}_{i+1}) 
= \left\lVert \boldsymbol{W}_{i+1}\boldsymbol{X}_{i+1} - \widetilde{\boldsymbol{W}}_{i+1}\widetilde{\boldsymbol{X}}_{i+1} \right\rVert_F^2 \\
&= \left\lVert (\boldsymbol{W}_{i+1} - \widetilde{\boldsymbol{W}}_{i+1})\boldsymbol{X}_{i+1} + \widetilde{\boldsymbol{W}}_{i+1}(\boldsymbol{X}_{i+1} - \widetilde{\boldsymbol{X}}_{i+1}) \right\rVert_F^2 \\
&= \left\lVert (\boldsymbol{W}_{i+1} - \widetilde{\boldsymbol{W}}_{i+1})\boldsymbol{X}_{i+1} \right\rVert_F^2 \\
&+ \left\lVert \widetilde{\boldsymbol{W}}_{i+1}(\boldsymbol{X}_{i+1} - \widetilde{\boldsymbol{X}}_{i+1}) \right\rVert_F^2 \\
& + 2\text{tr}(( \boldsymbol{W}_{i+1} - \widetilde{\boldsymbol{W}}_{i+1})\boldsymbol{X}_{i+1})^{\top}(\widetilde{\boldsymbol{W}}_{i+1}(\boldsymbol{X}_{i+1} - \widetilde{\boldsymbol{X}}_{i+1}))
\end{aligned}
\end{equation}
Since the third term is the trace of the product of two matrices, the first and second term is the square of the Frobenius norm, and considering that the weights and inputs in LLMs are matrices with large dimensions, the magnitude of the first and second term is much greater than the third term, so we ignore the third term and get:
\begin{equation}
\begin{aligned}
&\mathcal{L}(\boldsymbol{W}_{i+1}, \boldsymbol{X}_{i+1}) \approx \left\lVert (\boldsymbol{W}_{i+1} - \widetilde{\boldsymbol{W}}_{i+1})\boldsymbol{X}_{i+1} \right\rVert_F^2 \\
&+ \left\lVert \widetilde{\boldsymbol{W}}_{i+1}(\boldsymbol{X}_{i+1} - \widetilde{\boldsymbol{X}}_{i+1}) \right\rVert_F^2
\end{aligned}
\end{equation}
Since \(\left\lVert (\boldsymbol{W}_{i+1} - \widetilde{\boldsymbol{W}}_{i+1})\boldsymbol{X}_{i+1} \right\rVert_F^2 > 0 \). Therefore:
\begin{equation}
\mathcal{L}(\boldsymbol{W}_{i+1}, \boldsymbol{X}_{i+1}) >  \left\lVert \widetilde{\boldsymbol{W}}_{i+1}(\boldsymbol{X}_{i+1} - \widetilde{\boldsymbol{X}}_{i+1}) \right\rVert_F^2 
\end{equation}
According to Lemma \ref{lemma1}, we get:
\begin{equation}
\begin{aligned}
&\mathcal{L}(\boldsymbol{W}_{i+1}, \boldsymbol{X}_{i+1}) >  \sigma_{\min}^2(\widetilde{\boldsymbol{W}}_{i+1}) \left\lVert (\boldsymbol{X}_{i+1} - \widetilde{\boldsymbol{X}}_{i+1}) \right\rVert_F^2 \\
&= \sigma_{\min}^2(\widetilde{\boldsymbol{W}}_{i+1}) \left\lVert (\boldsymbol{W}_{i}\boldsymbol{X}_{i} - \widetilde{\boldsymbol{W}}_{i}\widetilde{\boldsymbol{X}}_{i}) \right\rVert_F^2 \\
&=\sigma_{\min}^2(\widetilde{\boldsymbol{W}}_{i+1})\mathcal{L}(\boldsymbol{W}_{i}, \boldsymbol{X}_{i})
\end{aligned}
\end{equation}
Since \(\sigma_{\min}^2(\widetilde{\boldsymbol{W}}_{i+1}) > 0\), we have proven that the increase of the reconstruction error of the \(i\)-th layer will lead to the increase of the lower bound of the reconstruction error of the \((i+1)\)-th layer.
\end{proof}

Theorem \ref{theorem2} shows that an increase in the reconstruction error of the previous layer in a sparse LLM usually leads to a further increase in the lower bound of the reconstruction error of the subsequent layer. In practice, this often means that an increase in the reconstruction error of the previous layer will lead to an increase in the reconstruction error of the subsequent layer. We have also observed this phenomenon in the left of Figure \ref{fig:framework}. We can see that when the reconstruction error of the earlier layers is smaller, the reconstruction error of the subsequent layers is also smaller. Conversely, when the reconstruction error of the earlier layers is larger, the reconstruction error of the subsequent layers is also larger.

According to Theorems \ref{theorem1} and \ref{theorem2}, we can easily get the following Theorem \ref{theorem3}:
\begin{theorem}[\textbf{Impact of the sparsity of the previous layer on the reconstruction error of the next layer.}]\label{theorem3}
Increasing the sparsity of the \(i\)-th layer will lead to an increase in the lower bound of the reconstruction error of the \((i+1)\)-th layer.
\end{theorem}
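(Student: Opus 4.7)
The plan is to obtain Theorem \ref{theorem3} as an immediate chaining of Theorems \ref{theorem1} and \ref{theorem2}, treating this statement as a corollary rather than an independent derivation. The statement combines two causal links that have already been established: sparsity rate $\to$ reconstruction error of the same layer (Theorem \ref{theorem1}), and reconstruction error of a layer $\to$ lower bound of reconstruction error of the next layer (Theorem \ref{theorem2}). So the proof is essentially transitivity through the intermediate quantity $\mathcal{L}(\boldsymbol{W}_i, \boldsymbol{X}_i)$.

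Concretely, I would proceed in three short steps. First, I would fix the $i$-th layer and consider two sparsity configurations $\widetilde{\boldsymbol{W}}_i^{(1)}$ and $\widetilde{\boldsymbol{W}}_i^{(2)}$ with the second being sparser, and invoke Theorem \ref{theorem1} to conclude that the corresponding reconstruction errors satisfy $\mathcal{L}^{(1)}(\boldsymbol{W}_i, \boldsymbol{X}_i) < \mathcal{L}^{(2)}(\boldsymbol{W}_i, \boldsymbol{X}_i)$. Second, I would feed these two error values into the bound of Theorem \ref{theorem2}, which gives
\begin{equation}
\mathcal{L}(\boldsymbol{W}_{i+1}, \boldsymbol{X}_{i+1}) > \sigma_{\min}^2(\widetilde{\boldsymbol{W}}_{i+1}) \, \mathcal{L}(\boldsymbol{W}_i, \boldsymbol{X}_i),
\end{equation}
so the lower bound of the $(i+1)$-th layer's reconstruction error is a strictly increasing function of $\mathcal{L}(\boldsymbol{W}_i, \boldsymbol{X}_i)$ since $\sigma_{\min}^2(\widetilde{\boldsymbol{W}}_{i+1}) > 0$. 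Third, I would combine the two monotonicities: higher sparsity in layer $i$ enlarges $\mathcal{L}(\boldsymbol{W}_i, \boldsymbol{X}_i)$, and enlarging $\mathcal{L}(\boldsymbol{W}_i, \boldsymbol{X}_i)$ enlarges the lower bound on $\mathcal{L}(\boldsymbol{W}_{i+1}, \boldsymbol{X}_{i+1})$, yielding the claim.

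The only subtlety, and the part I expect to require a sentence of care, is that Theorem \ref{theorem2}'s lower bound involves $\sigma_{\min}^2(\widetilde{\boldsymbol{W}}_{i+1})$, which in principle can itself depend on the sparsification of later layers. To make the chaining clean, I would hold the sparsification of the $(i+1)$-th layer fixed when comparing the two scenarios, so that $\sigma_{\min}^2(\widetilde{\boldsymbol{W}}_{i+1})$ is a common positive constant across the comparison; then the strict monotonicity of the bound in $\mathcal{L}(\boldsymbol{W}_i, \boldsymbol{X}_i)$ is unambiguous. This is a mild assumption and is the natural reading of the theorem, since the statement isolates the effect of changing the sparsity of layer $i$ alone. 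With this setup, the conclusion follows in one line, and there is no genuine obstacle beyond stating the composition carefully.
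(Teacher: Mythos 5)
Your proposal is correct and follows essentially the same route as the paper: the paper's own proof of Theorem \ref{theorem3} is precisely the one-line chaining of Theorems \ref{theorem1} and \ref{theorem2} through the intermediate quantity $\mathcal{L}(\boldsymbol{W}_i, \boldsymbol{X}_i)$. Your extra remark about holding $\sigma_{\min}^2(\widetilde{\boldsymbol{W}}_{i+1})$ fixed across the comparison is a small but welcome clarification that the paper leaves implicit.
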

\begin{proof}[Proof of Theorem \ref{theorem3}]
According to Theorems \ref{theorem1} and \ref{theorem2}, we have the following relationship:
\begin{equation}
\begin{aligned}
&\text{sparsity of layer } i \uparrow \Longrightarrow \mathcal{L}(\boldsymbol{W}_i, \boldsymbol{X}_i) \uparrow \\
&\Longrightarrow \mathcal{L}(\boldsymbol{W}_{i+1}, \boldsymbol{X}_{i+1}) > \sigma_{\min}^2(\widetilde{\boldsymbol{W}}_{i+1}) \mathcal{L}(\boldsymbol{W}_i, \boldsymbol{X}_i) \uparrow.
\end{aligned}
\end{equation}
Therefore, we complete the proof of Theorem \ref{theorem3}.
\end{proof}

To summarize all the above, we can get the following logical chain of our paper: sparsity rate of \(1\)-st layer \(\propto\) reconstruction error of \(1\)-st layer \(\propto\) reconstruction error of \(L\)-th layer \(\propto\) total error \(\propto\) accuracy loss. That is, when the sparsity rate of the \(1\)-st layer increases, it will lead to an increase in the reconstruction error of this layer. As the reconstruction error accumulates continuously from the \(1\)-st layer to the \(L\)-th layer, the reconstruction error of the \(L\)-th layer also increases accordingly. Then, the reconstruction errors of all layers of the model will exhibit an explosion phenomenon, resulting in a serious decline in the accuracy of the sparse model. We refer to the above phenomenon as the \textbf{``reconstruction error explosion''}. This phenomenon can be observed on the left side of Figure \ref{fig:framework}.

From the above theoretical analysis, we understand that the earlier layers are more important than the later layers. Setting a lower sparsity rate for the previous layers helps alleviate the problem of \textbf{``reconstruction error explosion''}. In the next section, we will introduce our method of determining the layer-wise sparsity rate in detail.

\subsection{Determining Layer-wise Sparsity Rates for LLMs}\label{Layer-wiseSparsity}
According to the theorem in Section \ref{sec:error_analysis}, the reconstruction error in sparse LLMs have \textbf{``reconstruction error explosion''} problem. Specifically, the error from the earlier layers will cause an increase in the error of the later layers. When the error from all earlier layers are accumulated, it will lead to a sharp increase in the error of the later layers. Meanwhile, this can lead to an increase in the total reconstruction error of sparse LLMs, thereby damaging the final accuracy of the sparse LLMs. Therefore, in order to mitigate the negative impact of the \textbf{``reconstruction error explosion''} of reconstruction errors on sparse LLMs, we can set the earlier layers to have lower sparsity and the later layers to have higher sparsity. Therefore, we propose to determine the sparsity rate of each layer in LLMs according to the following monotonically increasing arithmetic progression:
\begin{equation}\label{eq:sparsityrates}
s_i = S-\frac{\beta(L-1)}{2}+ \beta \times (i-1),
\quad
i=1,2,\dots,L
\end{equation}
where $s_i$ is the sparsity rate (fraction of zero entries) of the $i$-th layer, $L$ is the total layer number of LLMs, and $S$ is the average sparsity rate of all layers. $\beta$ is a hyperparameter that controls the degree of difference in the sparsity rate of each layer of LLMs. The above formula means that we only need to determine the hyperparameter $\beta$ to get the sparsity rate of each layer of LLMs.

We use grid search \citep{jimenez2008finding} to determine $\beta$ for sparse LLMs. Specifically, since $0\le s_0, s_L\le 1$ and considering the relatively low sparsity rate set for the earlier layers, the arithmetic progression should be increasing. Therefore, we can deduce that the possible range of values for $\beta$ is $0 < \beta \le \min(\frac{2S}{L-1}, \frac{2(1-S)}{L-1})$. This is a small range for a sparse LLMs. For example, for a LLaMA3-8B \citep{llama3} model with 32 layers and the average sparsity rate is $S=0.7$, then the range is $0 < \beta \le 0.019$. In order to find the optimal value of $\beta$ within the above  range, we adopt an grid search method with a step size of $0.002$. The goal is to find the value of $\beta$ that minimizes perplexity of the sparse LLMs on the WikiText-2 \citep{merity2016pointer} dataset. Since the reasonable range of $\beta$ is very small, this ensures that we can find the optimal $\beta$ very quickly. For example, for $0 < \beta \leq 0.019$, we only need to make $9$ attempts. Even for the largest 70B model, the reasonable range of $\beta$ is $0 < \beta \leq 0.0075$, and only $3$ attempts are required in this case. We present our ATP approach in Algorithm \ref{alg}.

Although the above method of determining the layer-wise sparsity rate of LLMs by a monotonically increasing arithmetic progression is very simple, our theoretical analysis in Sec. \ref{sec:AnalysisSparsityMethod} fully proves its rationality, and we have fully demonstrated through a large number of experiments in Sec. \ref{sec:experiments} that our method can effectively improve the accuracy of existing post-training sparsity methods and significantly outperforms current layer-wise sparsity methods.

\subsection{Analysis of the Proposed Determining Sparsity Method}\label{sec:AnalysisSparsityMethod}
We propose the following theorem to prove that the method for determining the layer-wise sparsity rate proposed in Eq. \ref{eq:sparsityrates} is theoretically close to the optimal solution:
\begin{theorem}\label{theorem4}
The total reconstruction error obtained from the monotonically increasing sparsity scheme proposed in Eq. \ref{eq:sparsityrates} is strictly less than that obtained from any non-monotonically increasing sparsity scheme.
\end{theorem}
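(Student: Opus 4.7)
The plan is to prove Theorem~\ref{theorem4} by an \emph{exchange argument} built on the cascade inequalities established in Theorems~\ref{theorem1}--\ref{theorem3}. Writing the total reconstruction error as $E(\{s_i\}) = \sum_{i=1}^{L}\mathcal{L}(\boldsymbol{W}_i,\boldsymbol{X}_i)$, I would first iterate the lower bound from Theorem~\ref{theorem2} to obtain, for every $k>i$,
\begin{equation}
\mathcal{L}(\boldsymbol{W}_k, \boldsymbol{X}_k) > \Big(\prod_{j=i+1}^{k}\sigma_{\min}^2(\widetilde{\boldsymbol{W}}_j)\Big)\mathcal{L}(\boldsymbol{W}_i, \boldsymbol{X}_i).
\end{equation}
Combined with Theorem~\ref{theorem1}, which makes each $\mathcal{L}(\boldsymbol{W}_i,\boldsymbol{X}_i)$ monotone increasing in $s_i$, this shows that the contribution of $s_i$ to $E$ is amplified through every later layer, so the ``marginal cost'' of assigning a unit of sparsity to layer $i$ is strictly decreasing in $i$.

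Second, I would execute the core swap step. Fix any non-monotonically increasing allocation $\{s_i'\}$ with mean $S$, let $j$ be any index at which $s_j' > s_{j+1}'$, and form $\{s_i''\}$ by exchanging these two values. All sparsities outside layers $j,j+1$ are unchanged, so only the cascade effect matters: moving the larger sparsity to the later position strictly decreases $\mathcal{L}_j + \mathcal{L}_{j+1}$ together with all downstream lower bounds, because the upstream error that gets amplified by $\prod_{m\ge j+2}\sigma_{\min}^2(\widetilde{\boldsymbol{W}}_m)$ is now smaller. Hence $E(\{s_i''\}) < E(\{s_i'\})$. Iterating bubble-sort-style swaps yields a monotonically increasing allocation with the same mean sparsity and strictly smaller total error than $\{s_i'\}$.

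Finally, to conclude against the \emph{specific} arithmetic progression of Eq.~\eqref{eq:sparsityrates}, I would view $E$ as a Schur-convex functional with respect to the ordering induced by the amplification factors $\prod_j \sigma_{\min}^2(\widetilde{\boldsymbol{W}}_j)$. Under a fixed mean $S$, every monotonically increasing sequence minimises $E$ up to the rearrangement inequality induced by this ordering; the arithmetic progression in Eq.~\eqref{eq:sparsityrates} is such a sequence and therefore inherits the strict inequality $E(\text{Eq.~\eqref{eq:sparsityrates}}) < E(\{s_i'\})$ established for any sorted arrangement of $\{s_i'\}$.

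The main obstacle will be this last step: the swap argument on its own only shows that \emph{some} monotonic arrangement strictly beats $\{s_i'\}$, not that the particular arithmetic progression does. Closing the gap rigorously requires a genuine Schur-convexity (or rearrangement) analysis of $E$ viewed as a function of the sparsity vector, and this is delicate because the singular values $\sigma_{\min}^2(\widetilde{\boldsymbol{W}}_j)$ themselves depend weakly on the chosen $s_j$, so the amplification factors are not fully decoupled from the allocation. I would handle this by bounding the induced variation of the singular values uniformly in $\beta$ over the admissible range derived in Section~\ref{Layer-wiseSparsity}, reducing the comparison to the clean monotone-amplification setting above.
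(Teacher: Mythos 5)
Your core argument is the same as the paper's: an adjacent-pair exchange (bubble-sort) argument showing that swapping any inversion $s_k > s_{k+1}$ strictly reduces the total error, iterated until the allocation is sorted. Two differences are worth flagging. First, the paper does not actually run the swap computation on the singular-value lower bounds from Theorem~\ref{theorem2}; it instead \emph{posits} a modeling recursion $\mathcal{L}_{i+1} = c\,\mathcal{L}_i + f(s_{i+1})$ with a constant $c>1$ and an increasing $f$, justified informally by Theorems~\ref{theorem1} and~\ref{theorem2}. Under that recursion the two-layer contribution is exactly $(1+c)f(s_k)+f(s_{k+1})$ before the swap and $(1+c)f(s_{k+1})+f(s_k)$ after, giving a clean difference of $c\bigl[f(s_k)-f(s_{k+1})\bigr]>0$. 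Your plan to instead iterate the one-sided bound $\mathcal{L}_k > \bigl(\prod_j \sigma_{\min}^2(\widetilde{\boldsymbol{W}}_j)\bigr)\mathcal{L}_i$ cannot by itself deliver the conclusion: lower bounds on the summands of one configuration do not let you compare its total against another configuration's total. You would need an equality-type (or at least two-sided) cascade model, which is precisely the idealization the paper adopts. On the plus side, you correctly track that a swap also shrinks all downstream errors, a point the paper silently benefits from but does not verify.

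Second, the ``main obstacle'' you identify is real, and the paper does not close it either: the exchange argument only shows that the \emph{sorted rearrangement} of a given non-monotonic allocation $\mathbf{s}^\diamond$ beats $\mathbf{s}^\diamond$; it does not show that the specific arithmetic progression of Eq.~\ref{eq:sparsityrates} (which generally has a different multiset of values) beats $\mathbf{s}^\diamond$. The paper's final sentence simply identifies ``the monotonically increasing scheme'' obtained by sorting with the scheme of Eq.~\ref{eq:sparsityrates}, which is a non sequitur as the theorem is literally stated. Your proposed Schur-convexity/rearrangement repair is the right instinct but is left unexecuted, and as you note it is complicated by the dependence of $\sigma_{\min}(\widetilde{\boldsymbol{W}}_j)$ on $s_j$. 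So your proposal reproduces the paper's argument at the same level of rigor, and your self-diagnosed obstacle is a genuine gap shared by the published proof rather than a defect unique to yours.
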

The proof of the above theorem is detailed in Appendix \ref{sec:provetheorem4}. 

In addition, we compare the layer-wise sparsity rates determined by Eq. \ref{eq:sparsityrates} with those obtained by Bayesian search in Sec. \ref{sec:AblationStudy}. The experimental results show that our method is close to the optimal solution obtained by Bayesian search. Moreover, since our method doesn't require a long time search, it is highly efficient compared to Bayesian search.

All in all, the above theoretical analysis and experimental validation demonstrate the superiority of our method, indicating that the sparsity allocation scheme proposed in Eq. \ref{eq:sparsityrates} is near-optimal both theoretically and empirically.
\section{Experiments}\label{sec:experiments}
\subsection{Experimental Setup}

\paragraph{Models.} We evaluate ATP across a diverse range of widely-used LLMs, including LLaMA1 (7B, 13B, 30B, 65B) \citep{touvron2023llama}, LLaMA2 (7B, 13B, 70B) \citep{touvron2023llama2}, LLaMA3-8B \citep{llama3}, LLaMA3.1-8B \citep{llama31}, OPT-13B \citep{zhang2022opt}, Vicuna-13B \citep{chiang2023vicuna}, Qwen2.5-7B \citep{yang2024qwen2}, and Mistral-7B \citep{jiang2023mistral}.  
\paragraph{Evaluation.} Our evaluation protocol aligns with established sparsification methods for LLMs (\emph{e.g.,}, SparseGPT \citep{frantar2023sparsegpt} and Wanda \citep{sun2023simple}), encompassing both zero-shot learning and language modeling capabilities. Specifically, we assess the perplexity of the models on the validation set of WikiText-2 \citep{merity2016pointer} and evaluate zero-shot performance on seven downstream tasks: BoolQ \citep{clark2019boolq}, ARC Easy and Challenge \citep{clark2018think}, HellaSwag \citep{zellers2019hellaswag}, WinoGrande \citep{sakaguchi2021winogrande}, OpenbookQA \citep{mihaylov2018can}, and PIQA \citep{bisk2020piqa}. Additionally, we measure performance on arithmetic and knowledge reasoning benchmarks, including 8-shot accuracy on the GSM8K dataset \citep{cobbe2021training} and 5-shot accuracy on the MMLU dataset \citep{hendrycks2020measuring}.  
\paragraph{Baselines.} We apply the layer-wise sparsity rates determined by ATP to several state-of-the-art post-training sparsification methods, including SparseGPT \citep{frantar2023sparsegpt}, Wanda \citep{sun2023simple}, DSnoT \citep{zhang2023dynamic}, Pruner-zero \citep{dong2024pruner}, and ALPS \citep{meng2024alps}. Furthermore, we compare ATP with recent methods for determining layer-wise sparsity rates in LLMs, such as OWL \citep{yin2023outlier}, AlphaPruning \citep{lu2024alphapruning}, DSA \citep{li2024discovering}, and ALS \citep{li2024adaptive}.  
\paragraph{More Models, Evaluations and Baselines.} In Sec. \ref{Moreresults}, we present additional experimental results, including the application of ATP to multimodal and vision models, integration with other compression techniques and LoRA fine-tuning, and comparisons with an expanded set of layer-wise sparsity baselines.  
\paragraph{Implementation Details.} Our pruning implementation builds upon the methods used by SparseGPT and Wanda, with the primary modification being the integration of layer-wise sparsity rates generated by ATP. 

\subsection{Zero-shot Tasks}
\paragraph{Quantitative Evaluation.}  
We report the average performance of 70$\%$ sparse LLMs across seven zero-shot tasks in Table \ref{tb:Zero-shot_tasks_results}. The results demonstrate that our ATP method consistently improves accuracy compared to the uniform sparsity baseline and significantly outperforms other state-of-the-art layer-wise sparsity methods. For instance, with the LLaMA3-8B model pruned using the Wanda method, ATP achieves a 3.43$\%$ higher accuracy than the best-performing DSA method, highlighting the effectiveness and superiority of ATP in enhancing sparse LLM performance.  
\begin{table*}[t]
    \centering
    \caption{Comparison of the average zero-shot accuracy across 7 tasks for 70$\%$ sparse LLMs obtained using various sparsity methods.}
    \label{tb:Zero-shot_tasks_results}
    \resizebox{0.9\textwidth}{!}{
    \begin{tabular}{c|c|cccc|ccc|c}
        \toprule
         & & \multicolumn{4}{c|}{LLaMA}  & \multicolumn{3}{c|}{LLaMA2} & \multicolumn{1}{c}{LLaMA3}\\
        \multirow{-2}{*}{Method} & \multirow{-2}{*}{\makecell[c]{Layerwise\\sparsity}} & 7B & 13B & 30B & 65B & 7B & 13B & 70B & 8B \\
        \midrule 
        Dense & - & 61.74 & 63.84 &  67.41&68.57 & 61.88& 65.00& 69.14 &  65.62  \\ 
        \midrule 
        & Uniform & 37.45 &  40.79& 53.35 & 57.76 &  35.33& 38.88 & 58.48  & 35.42 \\
        & OWL &44.19 & 48.18 & 55.88 & 59.79 &  41.75&   47.19&  59.26&35.42 \\
        & DSA & 43.72& 48.64 & 55.21 & 58.24 & 36.55 & 43.36 &  58.25&  37.85 \\
         &  AlphaPruning&44.99 & 49.81 &   56.67  & 61.05&43.37 &49.11 &   59.44& 35.61\\
        \gr \multirow{-5}{*}{Wanda}&  \bf{ATP} & \bf{47.03} &   \bf{51.60} & \bf{57.69}  &  \bf{61.92} &   \bf{45.83} &  \bf{52.11} &  \bf{60.91} &   \bf{41.28}   \\
        \midrule 
        & Uniform  &43.60 &48.00 &53.64&60.18 & 43.07   &  47.38 & 60.84 & 43.02\\
        & OWL & 46.57& 50.01 & 55.73 & 59.47 & 46.55 &  49.90 & 60.83 & 46.06\\
         &  AlphaPruning&46.58 & 50.63 &56.42  &60.32  & 46.20 & 50.86 & 61.03 & 45.27 \\
        \gr \multirow{-5}{*}{SparseGPT} &  \bf{ATP} & \bf{47.37} &   \bf{52.12} & \bf{57.55}  &  \bf{61.31} &   \bf{48.63} &  \bf{52.46} &  \bf{62.25} &   \bf{47.79} \\
        \bottomrule
    \end{tabular}
    }
\end{table*}
\paragraph{Varying Sparsity Rates.}  
We also evaluate the performance of sparse LLMs under reduced sparsity constraints. Specifically, Table \ref{tab:v_sparsity_rate_Zero-shot} presents the zero-shot accuracy of LLMs pruned using the Wanda method at a 50$\%$ sparsity rate. Even under this lower sparsity setting, ATP demonstrates substantial improvements in accuracy across all models, maintaining its performance advantage over existing layer-wise sparsity methods. This indicates that ATP is robust and effective in optimizing LLM accuracy under varying sparsity rates.  
\begin{table}[h]
\centering
\renewcommand{\arraystretch}{1.1}
\caption{Average zero-shot accuracy of sparse LLaMA2-7B/13B models pruned using the Wanda method at 50$\%$ sparsity rate.}
\label{tab:v_sparsity_rate_Zero-shot}
\resizebox{0.45\textwidth}{!}{
\begin{threeparttable}
\begin{tabular}{l|c|c|c|c}
\toprule 
Method& \multicolumn{1}{c|}{1-7B} & \multicolumn{1}{c|}{1-13B} & \multicolumn{1}{c|}{2-7B} & \multicolumn{1}{c}{2-13B} \\
    \midrule
    Dense &66.09 & 68.18 & 66.69& 69.25 \\
    \midrule
   Uniform &60.06 &  65.22& 63.03&65.73  \\
    OWL & 61.92& 65.91 & 63.88&68.02 \\
    DSA &61.90 & 65.40 &63.89 & 67.65 \\
    ALS& 61.59& 65.05 & 64.12& 67.11 \\
    AlphaPruning &61.91 & 66.19 &64.20 &67.80\\
  \gr   \bf{ATP}  & \bf62.72 &\bf 66.39&\bf64.49  & \bf68.18   \\ 
     \bottomrule
\end{tabular}
\begin{tablenotes}
\normalsize
\item * The zero-shot evaluation setting used here follows the configuration outlined in the ALS paper. For more details, refer to Sec. \ref{Zero-shotevaluationsetting}.
\end{tablenotes}
\end{threeparttable}
}
\end{table}

\subsection{Language Modeling}
We present the perplexity of 50$\%$ to 80$\%$ sparse LLaMA-7B and LLaMA2-7B models pruned using the Wanda method on the WikiText-2 dataset in Table \ref{tab:v_sparsity_rate_ppl}. The results show that our ATP method achieves lower perplexity compared to other layer-wise sparsity methods. Furthermore, this advantage becomes increasingly significant at higher sparsity rates. 
\begin{table}[h]
\centering
\renewcommand{\arraystretch}{1.1}
\caption{WikiText-2 perplexity of sparse LLaMA-7B/2-7B obtained by Wanda across varying sparsity rates.}
\label{tab:v_sparsity_rate_ppl}
\resizebox{0.7\textwidth}{!}{
\begin{tabular}{l| cccc|cccc}
\toprule 
& \multicolumn{4}{c|}{LLaMA-7B} & \multicolumn{4}{c}{LLaMA2-7B} \\
\midrule
   Method & 50\% & 60\%  & 70\% &  80\% & 50\% & 60\% & 70\% & 80\% \\
    \midrule
   Uniform & 7.26   & 10.63  & 84.52  & 5889.13 & 6.92 &10.96&74.26 &1980.85  \\
    OWL& 7.22   & 9.35   & 24.56  & 1002.87&6.87 & 9.80& 30.38&629.99  \\
    DSA &7.17 &9.38 &  24.52 &1232.88 &7.05&10.40&63.71 &1638.81  \\
    AlphaPruning & 7.18   & 9.47   & 23.86 & 698.56& 6.88 & 9.78 & 28.87& 1672.49 \\
  \gr   \bf{ATP} &\textbf{7.05}&\textbf{9.25}&\textbf{20.16}&\textbf{176.80}& \bf 6.82 & \bf 9.15& \bf 22.16 & \bf 425.12  \\ 
     \bottomrule
\end{tabular}}
\end{table}

\subsection{More LLM Architectures}
We validate the effectiveness of our ATP method on LLMs with a broader range of architectures. Specifically, we combined ATP with the Wanda method to obtain 70$\%$ sparse models, including LLaMA3.1-8B, OPT-6.7B, Vicuna-7B, Qwen2.5-7B, and Mistral-7B. We report the experimental results in Table \ref{tab:MoreArchitectures}. The results demonstrate that our method consistently enhances the performance of LLMs across various architectures.

\subsection{More Results}\label{sec:MoreResults}
We provide more experimental results in the appendix. Specifically, in Sec. \ref{sec:arithmetic}, we demonstrate the enhanced performance of our ATP method on arithmetic and knowledge reasoning tasks for sparse LLMs. Additionally, in Secs. \ref{sec:multimodal_tasks} and \ref{sec:vision_models}, we present the performance gains of ATP on sparse multimodal and vision models, respectively. In Sec. \ref{sec:other_Compression_Technologies}, we highlight the performance improvements when integrating ATP with other compression techniques, including N:M sparsity, structured pruning, and quantization. Furthermore, we compare ATP with additional layer-wise sparsity baselines (Sec. \ref{sec:more_baselines}), demonstrate its performance enhancements across various post-training sparsity methods (Sec. \ref{sec:more_post-training}), and showcase its effectiveness when combined with LoRA fine-tuning for sparse LLMs (Sec. \ref{sec:integrate_LoRA}).

\subsection{Ablation Study}\label{sec:AblationStudy}
\paragraph{Searching Step.}
In Sec. \ref{Layer-wiseSparsity}, we perform a grid search with a step size of $0.002$ to determine the optimal value of $\beta$. Here, we analyze the impact of different step sizes on the search results. As shown in Table \ref{tab:SearchingStep}, searches conducted with larger step sizes yield inferior results compared to those with a step size of $0.002$. This is because larger step sizes fail to sufficiently explore the possible optimal values of $\beta$. Conversely, further reducing the step size for a more fine-grained search shows limited improvement in perplexity. Therefore, to balance both accuracy and efficiency, we adopt a grid search with a step size of $0.002$.
\begin{table}[h!]
    \centering
    \caption{The impact of different step sizes on search results.}\label{tab:SearchingStep}
    \resizebox{0.6\textwidth}{!}{
    \begin{tabular}{lccccc}
        \toprule
        \textbf{Step size} & \textbf{0.0005} &\textbf{0.001}& \textbf{0.002}&\textbf{0.004} & \textbf{0.008}  \\
        \midrule
        \textbf{Perplexity (↓)} & 22.14 & 22.16 & 22.16 &23.09  & 23.09 \\
        \textbf{Search Time (min)} & 76 &38  & 18 & 8 &4 \\
        \bottomrule
    \end{tabular}
    }
\end{table}

\paragraph{Comparison with Bayesian Search.}
We compare the ATP method with layer-wise sparsity rates obtained through Bayesian search. Specifically, we use Bayesian search to determine the sparsity rates for the 50$\%$, 60$\%$, and 70$\%$ sparse LLaMA2-7B model. The search is conducted using the Optuna hyperparameter optimization framework \citep{akiba2019optuna}, performing 1000 iterations to optimize the sparsity rates across all 32 layers of LLaMA2-7B. The sparsity rate for each layer is constrained between 0 and 1, while ensuring that the average sparsity matches the target rates. We integrate Bayesian search with the Wanda method, with the objective of minimizing model perplexity on the WikiText-2 dataset. The comparison of perplexity and zero-shot accuracy between sparse LLMs obtained using ATP and Bayesian search is presented in Table \ref{tab:BayesianSearch}. The results indicate that the performance of our ATP method is comparable to the optimal solution obtained through Bayesian search, demonstrating that the layer-wise sparsity rates determined by our method are experimentally close to the optimal values identified by the search approach. However, Bayesian search requires approximately 33 hours to complete on a single NVIDIA A100 80GB GPU, whereas ATP only takes 18 minutes, demonstrating significantly higher efficiency.  
\begin{table}[h!]
    \centering
    \caption{Comparison of ATP and Bayesian search.}\label{tab:BayesianSearch}
    \resizebox{0.6\textwidth}{!}{
    \begin{tabular}{lccc}
        \toprule
        \textbf{Method} &\textbf{Sparsity}& \textbf{Perplexity (↓)} & \textbf{Accuracy (↑)} \\
        \midrule
        Bayesian Search & 50$\%$ & 6.81 & 59.67 \\
        \gr \textbf{ATP}& \textbf{50$\%$} & \textbf{6.82} & \textbf{59.63} \\
        \midrule
        Bayesian Search & 60$\%$ & 9.20 & 54.73 \\
        \gr \textbf{ATP}& \textbf{60$\%$} & \textbf{9.15} & \textbf{54.79} \\
        \midrule
        Bayesian Search & 70$\%$ & 22.10 & 45.90 \\
        \gr \textbf{ATP}& \textbf{70$\%$} & \textbf{22.16} & \textbf{45.83} \\
        \bottomrule
    \end{tabular}
    }
\end{table}

\paragraph{Inference Speedup.} We evaluate the acceleration performance of the sparse LLaMA2-7B model, with results summarized in Table \ref{tab:speedup}. The end-to-end token generation time was measured using the DeepSparse \citep{deepsparse} inference engine on an Intel Xeon Silver 4314 CPU and the nm-vllm \citep{nm-vllm} inference engine on an NVIDIA RTX 4090 GPU. Our method achieves significant speedups, ranging from 1.79$\times$ to 2.63$\times$ on the CPU and 1.71$\times$ to 2.23$\times$ on the GPU, compared to the dense model, at sparsity rates between 50$\%$ and 70$\%$.
\begin{table}[h!]
\centering
\caption{End-to-end inference acceleration of sparse LLaMA2-7B on CPU and GPU.}
\label{tab:speedup}
\setlength{\tabcolsep}{10pt}
\resizebox{0.6\textwidth}{!}{
\begin{tabular}{c|c|c|c|c|c}
\toprule
\textbf{Device} & \textbf{Sparsity} & \textbf{Dense} & \textbf{50$\%$} & \textbf{60$\%$} & \textbf{70$\%$} \\
\midrule
\multirow{2}{*}{CPU} & \makecell[c]{Throughput\\(tokens/s)} $\uparrow$ & 3.40  & \textbf{6.10} & \textbf{7.39} & \textbf{8.95} \\ 
\cmidrule{2-6}
& \makecell[c]{\textbf{Speedup} $\uparrow$} & 1.00$\times$  & \textbf{1.79$\times$} & \textbf{2.17$\times$} & \textbf{2.63$\times$}  \\
\midrule
\multirow{2}{*}{GPU} & \makecell[c]{Throughput\\(tokens/s)} $\uparrow$ & 57.29 & \textbf{97.92} & \textbf{111.86} & \textbf{127.67} \\ 
\cmidrule{2-6}
& \makecell[c]{\textbf{Speedup} $\uparrow$} & 1.00$\times$ & \textbf{1.71$\times$} & \textbf{1.95$\times$} & \textbf{2.23$\times$}  \\
\bottomrule
\end{tabular}}
\end{table}

\paragraph{More Ablation Study.} We provide more ablation results in the appendix. Specifically, in Sec. \ref{sec:ComputationalEfficiency}, we demonstrate the computational efficiency of our ATP method. In Secs. \ref{sec:AnalyzeDistribution} and \ref{ComparisonDistribution}, we analyze the layer-wise sparsity distribution generated by ATP and compare it with the distributions produced by other layer-wise sparsity methods. Furthermore, in Sec. \ref{sec:Different_beta}, we explore the impact of different $\beta$ settings on the perplexity of sparse LLMs.
\section{Conclusion}
In this paper, we propose a theoretically grounded method for determining layer-wise sparsity rates in LLMs, effectively addressing the challenge of \textbf{``reconstruction error explosion''}. Our approach utilizes an arithmetic progression to streamline sparsity allocation, reducing the complexity to a single hyperparameter while achieving near-optimal performance with minimal tuning. We have demonstrated through theoretical analysis and experimental validation that our sparsity allocation scheme is close to the optimal solution. Extensive experiments demonstrate the effectiveness of our method, yielding significant improvements in model perplexity, accuracy, and inference speed. Furthermore, our approach exhibits strong generalization across diverse architectures and modalities, establishing it as a versatile and robust solution for optimizing compressed models.

\bibliographystyle{plainnat}
\bibliography{ref}

\newpage
\clearpage
\appendix
\begin{center}
{\LARGE \textbf{Appendix}}
\end{center}
\section{Limitation and Future Work}
In this paper, we derive that layer-wise sparsity rates should gradually increase based on the reconstruction error analysis and determine these rates using a monotonically increasing arithmetic progression. Extensive experiments validate the effectiveness of our method. However, the arithmetic progression configuration may not be optimal, and we plan to explore more diverse sparsity rate schemes in the future. Additionally, while our method significantly enhances the accuracy of existing post-training sparsity techniques for LLMs, there remains a performance gap compared to lossless sparsification, particularly under high sparsity conditions. In future work, we aim to develop more advanced post-training sparsity methods to further improve the accuracy of sparse LLMs.

\section{ATP Algorithm}
\begin{algorithm}[H]
\caption{Using ATP method to obtain sparse LLMs}
\label{alg}
\KwIn{Dense LLMs $\mathcal{M}_{dense}$, average sparsity rate $S$}
\KwOut{Sparse LLMs $\mathcal{M}_{sparse}$}
Use grid search to get $\beta$ in Eq. \ref{eq:sparsityrates};

Determine layer-wise sparsity rate according to Eq. \ref{eq:sparsityrates};

Combine with post-training sparsity method (\emph{e.g.,} SpaeseGPT or Wanda) to obtain sparsity mask $\mathbf{W}$;

Apply $\mathbf{W}$ to $\mathcal{M}_{dense}$ yields $\mathcal{M}_{sparse}$.
\end{algorithm}

\section{Proof of Lemma \ref{lemma1}}\label{sec:provelemma}
\begin{proof}[Proof of Lemma \ref{lemma1}]
Let \(\boldsymbol{A} \in \mathbb{R}^{m \times n}\) and \(\boldsymbol{B} \in \mathbb{R}^{n \times p}\) be any matrices. Consider the singular value decomposition (SVD) of \(\boldsymbol{A}\):
\begin{equation}
\boldsymbol{A} = \boldsymbol{U} \boldsymbol{\Sigma} \boldsymbol{V}^T
\end{equation}
where:
\begin{itemize}
    \item \(\boldsymbol{U} \in \mathbb{R}^{m \times m}\) is an orthogonal matrix (\(\boldsymbol{U}^T \boldsymbol{U} = \boldsymbol{I}\)),
    \item \(\boldsymbol{V} \in \mathbb{R}^{n \times n}\) is an orthogonal matrix (\(\boldsymbol{V}^T \boldsymbol{V} = \boldsymbol{I}\)), and
    \item \(\boldsymbol{\Sigma} \in \mathbb{R}^{m \times n}\) is a diagonal matrix with non-negative singular values \(\sigma_1 \geq \sigma_2 \geq \dots \geq \sigma_{\min(m,n)} \geq 0\) on the diagonal.
\end{itemize}

Substituting the SVD of \(\boldsymbol{A}\) into the expression for \(\boldsymbol{AB}\), we have:
\begin{equation}
\boldsymbol{AB} = \boldsymbol{U} \boldsymbol{\Sigma} \boldsymbol{V}^T \boldsymbol{B}
\end{equation}

Therefore,
\begin{equation}
\left\lVert \boldsymbol{AB} \right\rVert_F^2 = \left\lVert \boldsymbol{U} \boldsymbol{\Sigma} \boldsymbol{V}^T \boldsymbol{B} \right\rVert_F^2
\end{equation}

Since \(\boldsymbol{U}\) is an orthogonal matrix, the Frobenius norm is invariant under orthogonal transformations. Specifically:
\begin{equation}
\left\lVert \boldsymbol{U} \boldsymbol{X} \right\rVert_F = \left\lVert \boldsymbol{X} \right\rVert_F \quad \forall \ \boldsymbol{X}
\end{equation}

Applying this property:
\begin{equation}
\left\lVert \boldsymbol{AB} \right\rVert_F^2 = \left\lVert \boldsymbol{\Sigma} \boldsymbol{V}^T \boldsymbol{B} \right\rVert_F^2
\end{equation}

The matrix \(\boldsymbol{\Sigma}\) is diagonal with singular values \(\sigma_i\) on the diagonal. Let \(\sigma_{\min} = \sigma_{\min}(\boldsymbol{A})\) denote the smallest singular value of \(\boldsymbol{A}\). Then, for any matrix \(\boldsymbol{X}\), we have:
\begin{equation}
\boldsymbol{\Sigma} \boldsymbol{X} \geq \sigma_{\min} \boldsymbol{X}
\end{equation}
in the sense that each singular value scales the corresponding component of \(\boldsymbol{X}\).

Therefore, applying this to our case:
\begin{equation}
\left\lVert \boldsymbol{\Sigma} \boldsymbol{V}^T \boldsymbol{B} \right\rVert_F^2 \geq \sigma_{\min}^2 \left\lVert \boldsymbol{V}^T \boldsymbol{B} \right\rVert_F^2
\end{equation}
This inequality holds because scaling each component by at least \(\sigma_{\min}\) results in the squared norm being scaled by at least \(\sigma_{\min}^2\).

Similarly to \(\boldsymbol{U}\), the orthogonal matrix \(\boldsymbol{V}\) preserves the Frobenius norm:
\begin{equation}
\left\lVert \boldsymbol{V}^T \boldsymbol{X} \right\rVert_F = \left\lVert \boldsymbol{X} \right\rVert_F \quad \forall \ \boldsymbol{X}
\end{equation}

Applying this property:
\begin{equation}
\left\lVert \boldsymbol{V}^T \boldsymbol{B} \right\rVert_F = \left\lVert \boldsymbol{B} \right\rVert_F
\end{equation}

Substituting back, we obtain:
\begin{equation}
\left\lVert \boldsymbol{AB} \right\rVert_F^2 = \left\lVert \boldsymbol{\Sigma} \boldsymbol{V}^T \boldsymbol{B} \right\rVert_F^2 \geq \sigma_{\min}^2 \left\lVert \boldsymbol{V}^T \boldsymbol{B} \right\rVert_F^2 = \sigma_{\min}^2 \left\lVert \boldsymbol{B} \right\rVert_F^2
\end{equation}

Thus, we have shown that:
\begin{equation}
\left\lVert \boldsymbol{AB} \right\rVert_F^2 \geq \sigma_{\min}^2 (\boldsymbol{A}) \left\lVert \boldsymbol{B} \right\rVert_F^2
\end{equation}
\end{proof}

\section{Proof of Theorem \ref{theorem4}}\label{sec:provetheorem4}
\begin{proof}[Proof of Theorem \ref{theorem4}]
Recall that each layer $i \in \{1, 2, \dots, L\}$ in an LLM has a sparsity rate $s_i \in [0, 1]$, where $s_i$ indicates the fraction of zero entries in the weight matrix of layer $i$. Let $\mathcal{L}_i$ denote the reconstruction error of layer $i$, and let $\mathcal{L} = \sum_{i=1}^{L} \mathcal{L}_i$ be the total reconstruction error of the entire LLM.

To prove Theorem \ref{theorem4}, we will make the following assumptions based on the preceding sections:

\begin{enumerate}
    \item According to Theorem \ref{theorem1}, The reconstruction error for each layer \( i \), denoted as \( \mathcal{L}_i \), is an increasing function of the sparsity rate \( s_i \), denote as \( f(s_i)\). 
    
    \item According to Theorem \ref{theorem2}, the reconstruction error propagates through layers such that the increase of reconstruction error of \(i\)-th layer will lead to the increase of reconstruction error of \((i+1)\)-th layer. Therefore we assume \( \mathcal{L}_{i+1} = c \mathcal{L}_i+f(s_i) \), where \( c > 1 \). The above formula indicates that the reconstruction error of a layer not only accumulates the reconstruction errors from previous layers but also that the current sparse layer contributes to the reconstruction error.
\end{enumerate}

The monotonically increasing sparsity scheme given by Eq.~\ref{eq:sparsityrates} (in Sec.~\ref{Layer-wiseSparsity}) implies that \(s_1 < s_2 < \cdots < s_L\), let us denote such a scheme as
\begin{equation}
\mathbf{s}^\uparrow \;=\; (s_1^\uparrow,\, s_2^\uparrow,\, \dots,\, s_L^\uparrow),
\end{equation}
where $s_1^\uparrow < s_2^\uparrow < \cdots < s_L^\uparrow$.

Consider a different non-monotonically increasing sparsity scheme:
\begin{equation}
  \mathbf{s}^\diamond \;=\; (s_1^\diamond,\, s_2^\diamond,\, \dots,\, s_L^\diamond),
\end{equation}
which is \textit{not} monotonically increasing. That is, there exists at least one index $k$ such that \(s_k^\diamond \;>\; s_{k+1}^\diamond\)
  
Moreover, suppose both $\mathbf{s}^\uparrow$ and $\mathbf{s}^\diamond$ satisfy the constraint that their average sparsities match the same overall budget. Formally,
\begin{equation}
  \frac{1}{L} \sum_{i=1}^L s_i^\uparrow \;=\; \frac{1}{L} \sum_{i=1}^L s_i^\diamond \;=\; S,
\end{equation}
and each $\mathbf{s}^\uparrow$ and $\mathbf{s}^\diamond$ results in a total reconstruction error
\begin{equation}
  \mathcal{L}^\uparrow \;=\; \sum_{i=1}^{L} \mathcal{L}_i^\uparrow, 
  \quad
  \mathcal{L}^\diamond \;=\; \sum_{i=1}^{L} \mathcal{L}_i^\diamond.
\end{equation}

We will show that for any non-monotonic sparsity vector $\mathbf{s}^\diamond$, we can \textit{redundantly reorder} it layer by layer to get a monotonically increasing vector $\mathbf{s}^\uparrow$ of the same average sparsity, such that the overall reconstruction error $\mathcal{L}^\uparrow$ is \textit{strictly smaller} than $\mathcal{L}^\diamond$. 
Ultimately, we will deduce
\begin{equation}
  \mathcal{L}^\uparrow < \mathcal{L}^\diamond.
\end{equation}

Let us focus on any adjacent pair $(s_k^\diamond, s_{k+1}^\diamond)$ where $s_k^\diamond > s_{k+1}^\diamond$. Define $s_k^\ast = s_{k+1}^\diamond$ and $s_{k+1}^\ast = s_k^\diamond$, effectively \textit{swapping} these two sparsities. We then compare the total contribution of layers $k$ and $k+1$ to the overall reconstruction error, first in the non-monotonic case and then in the swapped case.

We restrict our attention to layers $k$ and $k+1$:
\begin{equation}
  \text{Layer } k \quad\Rightarrow\quad 
  \mathcal{L}_k^\diamond = f(s_k^\diamond),
  \qquad
  \text{Layer } k+1 \quad\Rightarrow\quad
  \mathcal{L}_{k+1}^\diamond = c \, \mathcal{L}_k^\diamond + f(s_{k+1}^\diamond).
\end{equation}

Strictly speaking, $\mathcal{L}_{k+1}^\diamond$ depends on partial errors from layer~$k$ and its own sparsity $s_{k+1}^\diamond$. Intuitively, a higher error $\mathcal{L}_k^\diamond$ “propagates” or “magnifies” into layer $k+1$. We keep the rest of the layer-wise sparsities fixed outside of these two positions to isolate their effect.

Now, consider \emph{swapping} the two sparsities: $(s_k^\diamond,\, s_{k+1}^\diamond) \;\mapsto\; (s_k^\ast,\, s_{k+1}^\ast) = (s_{k+1}^\diamond,\, s_k^\diamond)$. Let
\begin{equation}
  \mathcal{L}_k^\ast = f(s_k^\ast) = f(s_{k+1}^\diamond),
  \quad
  \mathcal{L}_{k+1}^\ast = c \, \mathcal{L}_k^\ast + f(s_{k+1}^\ast) = c\,f(s_{k+1}^\diamond) + f(s_k^\diamond).
\end{equation}

Therefore, the total for these two layers is
\begin{equation}
  \mathcal{L}_\text{swapping}= \mathcal{L}_k^\ast + \mathcal{L}_{k+1}^\ast 
  \;=\; f(s_{k+1}^\diamond) 
  + \Bigl(c\,f(s_{k+1}^\diamond) + f(s_k^\diamond)\Bigr)
  \;=\; (1 + c)\,f(s_{k+1}^\diamond) \;+\; f(s_k^\diamond).
\end{equation}
Meanwhile, the original total is
\begin{equation}
  \mathcal{L}_\text{original}= \mathcal{L}_k^\diamond + \mathcal{L}_{k+1}^\diamond 
  \;=\; f(s_k^\diamond) 
  \;+\; \Bigl(c\,f(s_k^\diamond) + f(s_{k+1}^\diamond)\Bigr)
  \;=\; (1 + c)\,f(s_k^\diamond) \;+\; f(s_{k+1}^\diamond).
\end{equation}
The difference between the two is
\begin{equation}
\begin{aligned}
   \mathcal{L}_\text{original}-\mathcal{L}_\text{swapping}
   &=\Bigl[(1 + c)\,f(s_k^\diamond) + f(s_{k+1}^\diamond)\Bigr]
    \;-\;\Bigl[(1 + c)\,f(s_{k+1}^\diamond) + f(s_k^\diamond)\Bigr]\\
   &=\;(1 + c)\bigl[f(s_k^\diamond) - f(s_{k+1}^\diamond)\bigr]
      \;-\;\bigl[f(s_k^\diamond) - f(s_{k+1}^\diamond)\bigr]\\
   &=\;c\,\bigl[f(s_k^\diamond) - f(s_{k+1}^\diamond)\bigr].
\end{aligned}
\end{equation}
Since $c > 1$ and $f(s_k^\diamond) > f(s_{k+1}^\diamond)$, it follows that \(c\,\bigl[f(s_k^\diamond) - f(s_{k+1}^\diamond)\bigr] \;>\; 0.\)

Hence we obtain
\begin{equation}
  \mathcal{L}_\text{original}
  \;>\; \mathcal{L}_\text{swapping} .
\end{equation}
This shows that \emph{locally swapping} a pair $(s_k^\diamond, s_{k+1}^\diamond)$ with $s_k^\diamond> s_{k+1}^\diamond$ \emph{reduces} the total reconstruction error contributed by layers $k$ and $k+1$. Globally across all $L$ layers, repeating such a swap for each adjacent pair that breaks the monotonicity moves $\mathbf{s}^\diamond$ to a strictly monotonically increasing sequence of sparsities.

By iterating this argument over each adjacent pair that fails monotonicity, we reorder $\mathbf{s}^\diamond$ into  $\mathbf{s}^\uparrow$. Since every swap strictly reduces the local error, the resulting \textit{monotonically increasing} scheme $\mathbf{s}^\uparrow$ has an overall reconstruction error:
\begin{equation}
  \mathcal{L}^\uparrow 
  \;=\; \sum_{i=1}^{L} \mathcal{L}_i^\uparrow
  \;<\; \sum_{i=1}^{L} \mathcal{L}_i^\diamond
  \;=\; \mathcal{L}^\diamond.
\end{equation}
Therefore, the total reconstruction error obtained from the monotonically increasing sparsity scheme proposed in Eq. \ref{eq:sparsityrates} is strictly less than that obtained from any non-monotonically increasing sparsity scheme, under the same average sparsity constraint.

\end{proof}

\section{Zero-shot evaluation setting}\label{Zero-shotevaluationsetting}
We use the lm-eval-harness framework \citep{gao2021framework} to evaluate the zero-shot performance of LLMs. By default, we follow the settings used in Wanda \citep{sun2023simple} and AlphaPruning \citep{lu2024alphapruning}, reporting the ``acc'' metric across all datasets. However, lm-eval-harness provides multiple metrics depending on the dataset, including both ``acc'' and ``acc\_norm''. In contrast, ALS \citep{li2024adaptive} employs different metrics for different datasets. The evaluation metrics are summarized in Table \ref{tab:evaluation-metrics}.
\begin{table}[h]
    \centering
    \caption{Evaluation Metrics for Different Datasets}
    \label{tab:evaluation-metrics}
    \resizebox{0.3\textwidth}{!}{
    \begin{tabular}{l|l}
        \hline
        \textbf{Dataset}     & \textbf{Metric} \\ \hline
        boolq               & acc              \\ \hline
        hellaswag           & acc\_norm        \\ \hline
        arc\_easy           & acc              \\ \hline
        piqa                & acc              \\ \hline
        arc\_challenge      & acc\_norm        \\ \hline
        winogrande          & acc              \\ \hline
        openbookqa          & acc\_norm        \\ \hline
    \end{tabular}
    }
\end{table}

\section{More Results}\label{Moreresults}
\subsection{More LLM Architectures}
\begin{table}[h]
    \centering
    \caption{The performance improvement of ATP on sparse LLMs across a wider range of architectures.}\label{tab:MoreArchitectures}
    \resizebox{0.48\textwidth}{!}{
    \begin{tabular}{lccc}
        \toprule
        \textbf{Method} & \textbf{Model} & \textbf{Perplexity (↓)} & \textbf{Accuracy (↑)} \\
        \midrule
        Dense & LLaMA3.1-8B& 6.18  & 65.93 \\
        Wanda  & LLaMA3.1-8B & 109.99 &  36.10\\
        \gr \textbf{w. ATP} & \textbf{LLaMA3.1-8B} & \textbf{72.05} & \textbf{41.59} \\
        \midrule
        Dense &OPT-13B&10.13  & 55.22 \\
        Wanda  &OPT-13B& 73.70 &  41.51\\
        \gr \textbf{w. ATP} & \textbf{OPT-13B} & \textbf{33.98} & \textbf{44.58} \\
        \midrule
        Dense & Vicuna-13B&  5.94& 65.53 \\
        Wanda  & Vicuna-13B & 44.89 & 42.06 \\
        \gr \textbf{w. ATP} & \textbf{Vicuna-13B} & \textbf{20.24} & \textbf{53.19} \\
        \midrule
        Dense & Qwen2.5-7B& 6.77 &65.36  \\
        Wanda  & Qwen2.5-7B & 75.16 & 41.10 \\
        \gr \textbf{w. ATP} & \textbf{Qwen2.5-7B} & \textbf{43.28} & \textbf{42.82} \\
        \midrule
        Dense &Mistral-7B& 5.28 & 66.17 \\
        Wanda  & Mistral-7B & 57.44 & 37.62 \\
        \gr \textbf{w. ATP} & \textbf{Mistral-7B} & \textbf{29.19} & \textbf{42.76} \\
        \bottomrule
    \end{tabular}
    }
\end{table}

\subsection{Arithmetic and Knowledge Reasoning Tasks}\label{sec:arithmetic}
We further evaluate the performance improvements of our ATP method on arithmetic and knowledge reasoning tasks for sparse models. Specifically, We evaluate the 8-shot accuracy of 40$\%$ sparse models on the GSM8K dataset and the 5-shot accuracy of 60$\%$ sparse models on the MMLU dataset. The results are presented in Table \ref{tab:gsm8kandmmlu}. Our ATP method significantly improves the accuracy of sparse models on both tasks, further demonstrating the generalization and effectiveness of our approach.
\begin{table}[h!]
    \centering
    \caption{Accuracy of sparse LLaMA2 on the GSM8K and MMLU datasets.}\label{tab:gsm8kandmmlu}
    \resizebox{0.45\textwidth}{!}{
    \begin{tabular}{lcccc}
        \toprule
         & \multicolumn{2}{c}{\textbf{GSM8K}}  & \multicolumn{2}{c}{\textbf{MMLU}} \\
        \textbf{Method} & \textbf{7B} & \textbf{13B} & \textbf{7B} & \textbf{13B}  \\
        \midrule
        Dense         & 14.60 &24.56  & 45.90 &55.70 \\
        \midrule
        SparseGPT     & 11.07 & 16.91 & 29.40  &39.60 \\
        \gr \textbf{SparseGPT w.ATP} &\textbf{12.01}&\textbf{19.02}&\textbf{32.40}&\textbf{47.70}
        \\
        \midrule
        Wanda         & 8.72 &17.51  &  28.90 & 34.80  \\
        \gr \textbf{Wanda w.ATP} &\textbf{9.93}&\textbf{19.56}&\textbf{32.80}&\textbf{46.20}\\
        \bottomrule
    \end{tabular}
    }
\end{table}

\subsection{Multimodal Tasks}\label{sec:multimodal_tasks}
We demonstrate the applicability of our method to multimodal models by integrating it with Wanda to sparsify the Vicuna-7B model \citep{chiang2023vicuna} in LLaVA-1.5 \citep{liu2024improved}. The performance of the sparse model was evaluated on various visual question-answering and reasoning benchmarks, including VQA \citep{singh2019towards}, VQAv2 \citep{goyal2017making}, and SQA \citep{lu2022learn}. We compared our method with magnitude-based pruning, SparseGPT, Wanda, and the DSA method combined with Wanda, under a 50$\%$ sparsity rate. The results in Table \ref{tab:multimodal} show that our method outperforms the magnitude-based, SparseGPT, and Wanda approaches and demonstrates superiority over the DSA method.
\begin{table}[h!]
    \centering
    \caption{The performance improvement of the ATP method on sparse multimodal models.}
    \resizebox{0.4\textwidth}{!}{
    \label{tab:multimodal}
    \begin{tabular}{lccc}
        \toprule
        \textbf{LLaVA-1.5}  & \textbf{VQA} & \textbf{VQAv2} & \textbf{SQA} \\
        \midrule
        Dense & 58.20 & 78.50 & 66.80  \\
        Magnitude & 38.39 & 63.50 & 31.24  \\
        SparseGPT & 53.69 & 75.86 & 63.92 \\
        Wanda & 53.05 & 75.72 & 63.99  \\
        Wanda w. DSA & 54.36& 76.08 & 65.57 \\
       \gr \textbf{Wanda w. ATP} &\textbf{55.21} &\textbf{76.92} &\textbf{66.01} \\
        \bottomrule
    \end{tabular}
    }
\end{table}

\subsection{Vision Models}\label{sec:vision_models}
We compare our ATP method with other approaches for determining layer-wise sparsity rates on vision models. Experiments are conducted on both CNN and Transformer architectures, including ConvNeXt \citep{Liu_2022_CVPR}, ViT \citep{dosovitskiy2021an}, and DeiT \citep{touvron2021training}, and we report the Top-1 accuracy on the ImageNet-1K dataset \citep{deng2009imagenet}. Sparse models are obtained using the Wanda \citep{sun2023simple} method, and ATP is compared against the following baselines: uniform sparsity rate, OWL \citep{yin2023outlier}, and AlphaPruning \citep{lu2024alphapruning}. The results for ConvNeXt are presented in Table \ref{tb:ConvNext}. These results demonstrate that our method is effective for determining layer-wise sparsity rates in vision models and outperforms existing methods. This further confirms the broad applicability of our monotonically increasing sparsity scheme in improving the accuracy of diverse sparse models.
\begin{table}[h!]
    \centering
    \caption{Comparison of ImageNet-1K accuracy across different layer-wise sparsity methods on the ConvNeXt-Base model.}\label{tb:ConvNext}
    \resizebox{0.45\textwidth}{!}{
    \begin{tabular}{lcccc}
        \toprule
        \textbf{Method} & \textbf{50$\%$} &\textbf{60$\%$}& \textbf{70$\%$}&\textbf{80$\%$} \\
        \midrule
        Wanda & 82.72 & 80.55 & 68.18 & 6.44 \\
        w. OWL & 82.76 & 80.53 & 68.28 & 6.32 \\
        w. AlphaPruning & 82.76 & 80.89 &74.24 & 42.35 \\
        \gr \textbf{w. ATP} & \textbf{82.85} & \textbf{81.10} &\textbf{75.58} &\textbf{56.81} \\
        \bottomrule
    \end{tabular}
    }
\end{table}

We also present the accuracy results for sparse ViT and DeiT models in Table \ref{tb:vitanddeit}. Additionally, we compare our method with several widely-used non-uniform layer-wise sparsity strategies in computer vision, including ERK \citep{mocanu2018scalable}, Global \citep{frankle2018lottery}, and LAMP \citep{lee2020layer}, as well as AlphaPruning \citep{lu2024alphapruning}. The results indicate that our method outperforms all the aforementioned baselines.
\begin{table*}[!th]
    \centering
    \caption{Comparison of ImageNet-1K accuracy across different layer-wise sparsity methods on the ViT-B and DeiT-B models.
    }
    \label{tb:vitanddeit} 
    \resizebox{0.6\textwidth}{!}{
    \begin{tabular}{l|ccc|ccc}
        \toprule
        & \multicolumn{3}{c|}{ViT-B 16/224} & \multicolumn{3}{c}{DeiT-B 16/224}\\
        Method & 40\% & 50\% & 60\% & 40\% & 50\% & 60\% \\
        \midrule 
        Uniform & 70.87 & 59.46 & 29.97 & 80.08 & 76.37 & 61.72\\
        ERK \citep{mocanu2018scalable} & 70.89 & 60.49 & 33.15 & 80.05 & 76.22 & 63.49 \\
        Global \citep{frankle2018lottery} & 66.81 & 45.75 & 8.09 & 79.94 & 75.09 & 57.01\\
        LAMP \citep{lee2020layer} & 69.45 & 57.51 & 26.99 & 80.19 & 76.35 & 63.32 \\
        AlphaPruning \citep{lu2024alphapruning} & 71.58 & 64.29 & 44.21 & 80.21 & 77.11 & 64.56 \\
        \gr \textbf{ATP} &\textbf{72.03} &\textbf{65.46} &\textbf{47.74} & \textbf{80.50}&\textbf{78.02} &\textbf{69.73} \\
        \bottomrule
    \end{tabular}
    }
\end{table*}

\subsection{Integrate with other Compression Technologies}\label{sec:other_Compression_Technologies}
To demonstrate the generalization ability of our method for determining layer-wise sparsity rates, we combine ATP with other compression techniques, including N:M sparsity, structured pruning, and mixed-precision quantization. For N:M sparsity, we follow the mixed N:8 settings \citep{sun2021dominosearch}, using ATP to determine the value of \(N\) for each layer while maintaining average sparsity at 2:8, 3:8 and 4:8. In terms of structured pruning, we integrated ATP with LLM-Pruner \citep{ma2023llm}, where ATP determines the layer-wise sparsity rates, and LLM-Pruner applies pruning accordingly. For mixed-precision quantization, we combine ATP with the LIMPQ method \citep{tang2022mixed} to determine the quantization bits for each layer. We apply these compression techniques to the LLaMA-7B model and report the perplexity of the compressed models on the WikiText-2 validation set. The experimental results are presented in Table \ref{tab:otherCompressionTechnologies}. It shows that ATP significantly enhances the performance of various compression methods and outperforms both the OWL and AlphaPruning approaches.
\begin{table}[h!]
    \centering
    \caption{Experimental results of combining ATP with other compression technologies.}\label{tab:otherCompressionTechnologies}
    \resizebox{0.6\textwidth}{!}{
    \begin{tabular}{lccc}
        \toprule
        \textbf{Method} & \textbf{4:8}  & \textbf{3:8}& \textbf{2:8} \\
        \midrule
        Wanda  & 8.57& 42.56& 2962.00  \\
        OWL & 8.55& 22.77 & 331.37\\
        AlphaPruning   & 8.55&21.49 &585.81\\
        \gr \textbf{ATP} & \textbf{8.15}&\textbf{16.08}&\textbf{265.96}\\
        \bottomrule
        \textbf{Method} & \textbf{20$\%$} & \textbf{40$\%$} & \textbf{60$\%$}  \\
        \midrule
        LLM-Pruner &16.95& 30.38 &90.02\\
        OWL & 18.57&28.65 &76.99\\
        AlphaPruning &16.78 & 29.11& 71.21\\
        \gr \textbf{ATP}& \textbf{15.51} &\textbf{27.40}&\textbf{64.25}\\
        \bottomrule
        \textbf{Method} & \textbf{Mixed 3/4 Bit} & \textbf{Mixed 2/3/4 Bit} & \textbf{Mixed 2/4 Bit}\\
        \midrule
        Random   & 12.04 & 11455.54 & 14817.12 \\
        $L_1$ norm & 14.61 & 13959.42 & 33679.21 \\
        OWL      & 9.54  & 311.95   & 8429.39  \\
        AlphaPruning     & 9.01  & 261.39   & 7630.14  \\
        \gr \textbf{ATP}& \textbf{8.52}  & \textbf{198.65}&\textbf{5321.35}\\
        \bottomrule
    \end{tabular}
    }
\end{table}

\subsection{More Layer-wise Sparsity Baselines}\label{sec:more_baselines}
We compare our method with additional approaches for determining layer-wise sparsity rates, including Uniform \citep{zhu2017prune}, Global \citep{frankle2018lottery}, ER \citep{mocanu2018scalable}, ER-Plus \citep{liu2022unreasonable}, and LAMP \citep{lee2020layer}. These methods are combined with the Wanda pruning approach to obtain sparse LLaMA-7B models. The experimental results are presented in Table \ref{tab:MoreBaselines}. It shows that across sparsity rates ranging from 50$\%$ to 80$\%$, the perplexity of models pruned using the ATP method is consistently lower than that of other baselines, further demonstrating the superiority of our approach.
\begin{table}[h!]
    \centering
    \caption{Comparison of WikiText-2 perplexity across various sparsity rates and methods.}\label{tab:MoreBaselines}
    \resizebox{0.5\textwidth}{!}{
    \begin{tabular}{lcccc}
        \toprule
        \textbf{Method/Perplexity (↓)} & \textbf{50\%} & \textbf{60\%} & \textbf{70\%} & \textbf{80\%} \\
        \midrule
        Global  & 14848  & 17765  & 5147   & 39918.56 \\
        LAMP    & 7.57   & 12.86  & 185.52 & 15647.87 \\
        LAMP (per-block)  & 7.25   & 10.95  & 98.77  & 7318.08  \\
        ER      & 7.80   & 12.41  & 119.66 & 6263.79  \\
        ER-Plus & 8.00   & 14.04  & 229.17 & 6013.91  \\
        Uniform & 7.26   & 10.63  & 84.52  & 5889.13  \\
        \gr \textbf{ATP} &\textbf{7.05}&\textbf{9.25}&\textbf{20.16}&\textbf{176.80}\\
        \bottomrule
    \end{tabular}
    }
\end{table}

\subsection{Integrate with More Post-training Sparsity Methods}\label{sec:more_post-training}
We have demonstrated that our method improves performance over Wanda and SparseGPT methods. Notably, ATP can be integrated with any post-training sparsity method to further enhance their effectiveness. In this section, we showcase the performance improvements achieved by combining ATP with other post-training sparsity methods. Specifically, we apply ATP to DSnoT \citep{zhang2023dynamic}, Pruner-Zero \citep{dong2024pruner}, and ALPS \citep{meng2024alps} to obtain a 70$\%$ sparse LLaMA2-7B model. The perplexity and zero-shot accuracy results of the sparse models are presented in Table \ref{tab:more_post-training}. The results indicate that ATP significantly enhances the performance of DSnoT, Pruner-Zero, and ALPS methods.
\begin{table}[h!]
    \centering
    \caption{The performance improvement of ATP on DSnoT, Pruner-zero and ALPS methods.}\label{tab:more_post-training}
    \resizebox{0.4\textwidth}{!}{
    \begin{tabular}{lcc}
        \toprule
        \textbf{Method} & \textbf{Perplexity (↓)} & \textbf{Accuracy (↑)} \\
        \midrule
        Dense  & 5.12 & 61.88 \\
        \midrule
        DSnoT   & 77.83 & 35.11 \\
        \gr \textbf{w. ATP}& \textbf{24.31} & \textbf{44.51} \\
        \midrule
        Pruner-Zero   &103.15  & 34.78 \\
        \gr \textbf{w. ATP}& \textbf{48.82} & \textbf{44.77} \\
        \midrule
        ALPS   &19.31  & 46.75 \\
        \gr \textbf{w. ATP}& \textbf{17.99} & \textbf{49.82} \\
        \bottomrule
    \end{tabular}
    }
\end{table}

\subsection{Integrate with LoRA Fine-tuning}\label{sec:integrate_LoRA}
We further demonstrate the effectiveness of LoRA fine-tuning \citep{hu2021lora} in narrowing the performance gap between highly sparse LLMs and dense models. Specifically, we obtain a 70$\%$ sparse LLaMA2-7B model using the Wanda method and fine-tune it on 10,000 samples from the Alpaca-GPT4 \citep{peng2023instruction} dataset. We compare the results against models sparsified using uniform sparsity, OWL, and AlphaPruning methods. The results in Table \ref{tab:lora} show that LoRA fine-tuning significantly improves the accuracy of the sparse model, further reducing the gap with the dense model. Additionally, the sparse model obtained through the ATP method achieves higher accuracy, and this advantage is retained even after LoRA fine-tuning.
\begin{table}[h!]
    \centering
    \caption{Comparison of the perplexity and zero-shot accuracy of the 70$\%$ sparse LLaMA2-7B obtained by LoRA fine-tuning.}\label{tab:lora}
    \resizebox{0.5\textwidth}{!}{
    \begin{tabular}{lccc}
        \toprule
        \textbf{Method} & \textbf{Fine-tuning} & \textbf{Perplexity (↓)} & \textbf{Accuracy (↑)} \\
        \midrule
        Dense & N.A. &5.12  & 61.88 \\
        \midrule
        Uniform  & N.A. &74.26  & 35.33 \\
        Uniform & LoRA & 13.36 & 47.10 \\
        \midrule
        OWL  & N.A. &30.38  & 41.75 \\
        OWL & LoRA & 12.89 & 50.26 \\
        \midrule
        DSA  & N.A. &63.71  & 36.55 \\
        DSA & LoRA & 13.19 & 49.49 \\
        \midrule
        AlphaPruning & N.A. & 28.87 & 43.37 \\
        AlphaPruning& LoRA & 12.76 &50.37 \\
        \midrule
        \textbf{ATP} &\textbf{N.A.}&\textbf{22.16}&\textbf{45.83}\\
        \gr \textbf{ATP}&\textbf{LoRA}&\textbf{12.28}&\textbf{50.79}\\
        \bottomrule
    \end{tabular}
    }
\end{table}

\section{More Ablation Study}\label{sec:MoreAblationStudy}
\subsection{Computational Efficiency.}\label{sec:ComputationalEfficiency}
In Table \ref{tab:ComputationEfficiency}, we report the time required to determine the layer-wise sparsity rates for 70$\%$ sparse LLMs using our ATP method. The measurements were conducted on NVIDIA A100 80GB GPUs. The results show that only a few searches are needed to obtain the sparsity rates due to the narrow range of reasonable values for the hyperparameter \(\beta\). Furthermore, as the number of model parameters increases, this range narrows even further, enabling the sparsity rates to be determined with fewer searches. For example, for the largest 70B model, only three searches are necessary. This demonstrates that our method is highly computationally efficient.
\begin{table}[h!]
    \centering
    \caption{Computational efficiency of our ATP method (in minutes).}\label{tab:ComputationEfficiency}
    \resizebox{0.5\textwidth}{!}{
    \begin{tabular}{lcccccc}
        \toprule
        \textbf{Model} &\textbf{7B}& \textbf{8B}&\textbf{13B} & \textbf{30B}& \textbf{65B} & \textbf{70B} \\
        \midrule
        Wanda &2.0  & 2.2 & 3.8 & 7.8 &13.5 & 14.1\\
        w. ATP  &2.0$\times$9 &2.2$\times$9 &3.8$\times$7 &7.8$\times$5 & 13.5$\times$3& 14.1$\times$3\\
        \midrule
        SparseGPT& 6.6 & 7.3 &10.0 & 35.0 &60.0 &67.5\\
        w. ATP & 6.6$\times$9 & 7.3$\times$9 &7.3$\times$7 &10.0$\times$5  &60.0$\times$3 &67.5$\times$3\\
        \bottomrule
    \end{tabular}
    }
\end{table}

\subsection{Analyze the Layer-wise Sparsity Distribution.}\label{sec:AnalyzeDistribution}
We analyze the sparsity rate distribution across different average sparsity levels in Figure \ref{fig:comparison_average_rate}. Our findings indicate that at lower average sparsity rates, the differences in sparsity rates across layers are minimal. In contrast, these differences become more pronounced at higher average sparsity rates.
\begin{figure}[htbp]
    \centering
    \includegraphics[width=0.5\linewidth]{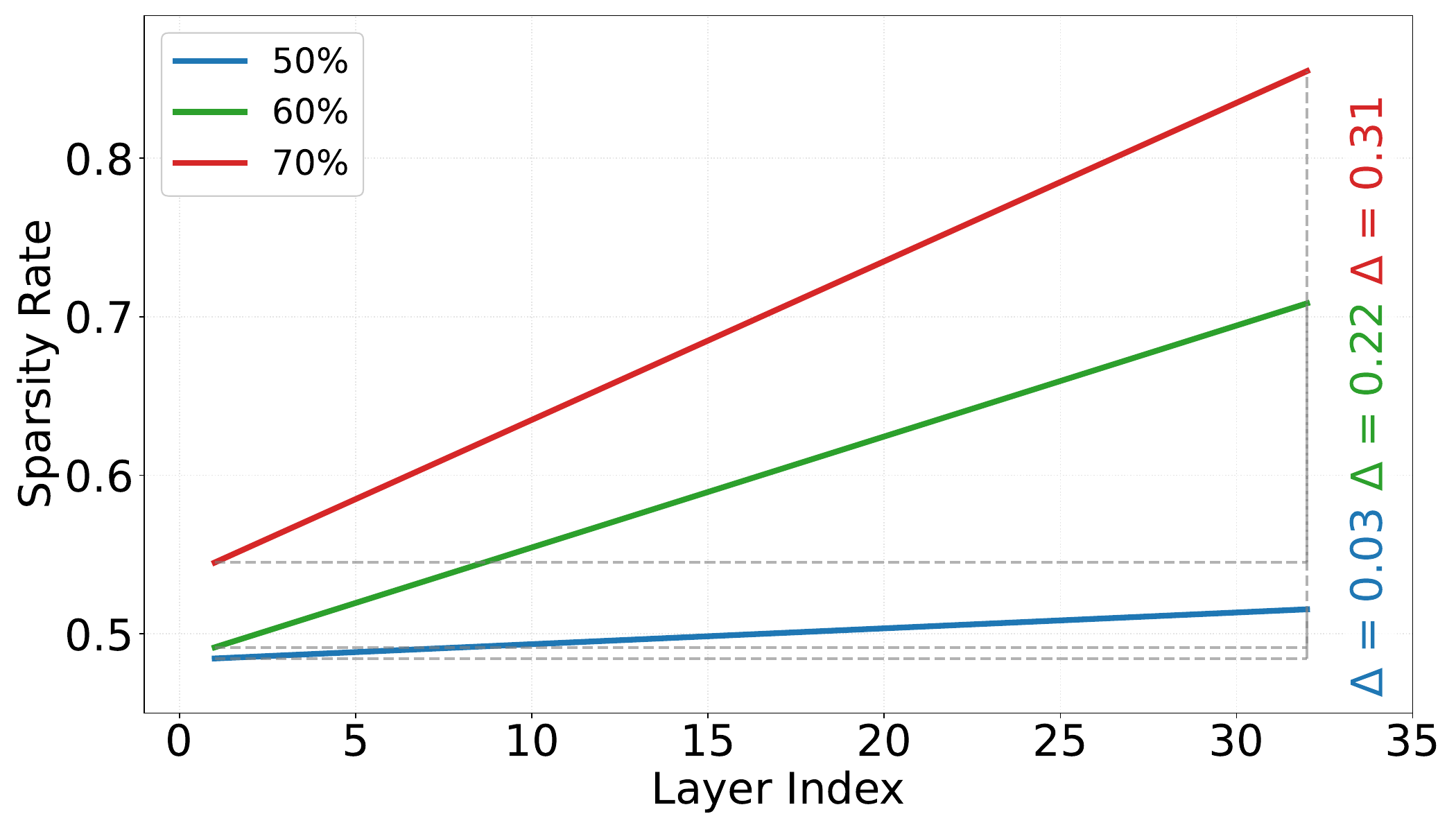}
    \caption{Comparison of layer-wise sparsity rate distributions at different average sparsity levels.}
    \label{fig:comparison_average_rate}
\end{figure}

\subsection{Comparison of Sparsity Rates Distribution with other Methods}\label{ComparisonDistribution}
Figure \ref{fig:comparison_methods} illustrates the sparsity rate distributions obtained from various layer-wise sparsity methods. We observe that the sparsity rates generally follow an increasing pattern from low to high, further validating the rationale behind our method.
\begin{figure}[h]
    \centering
    \includegraphics[width=0.5\linewidth]{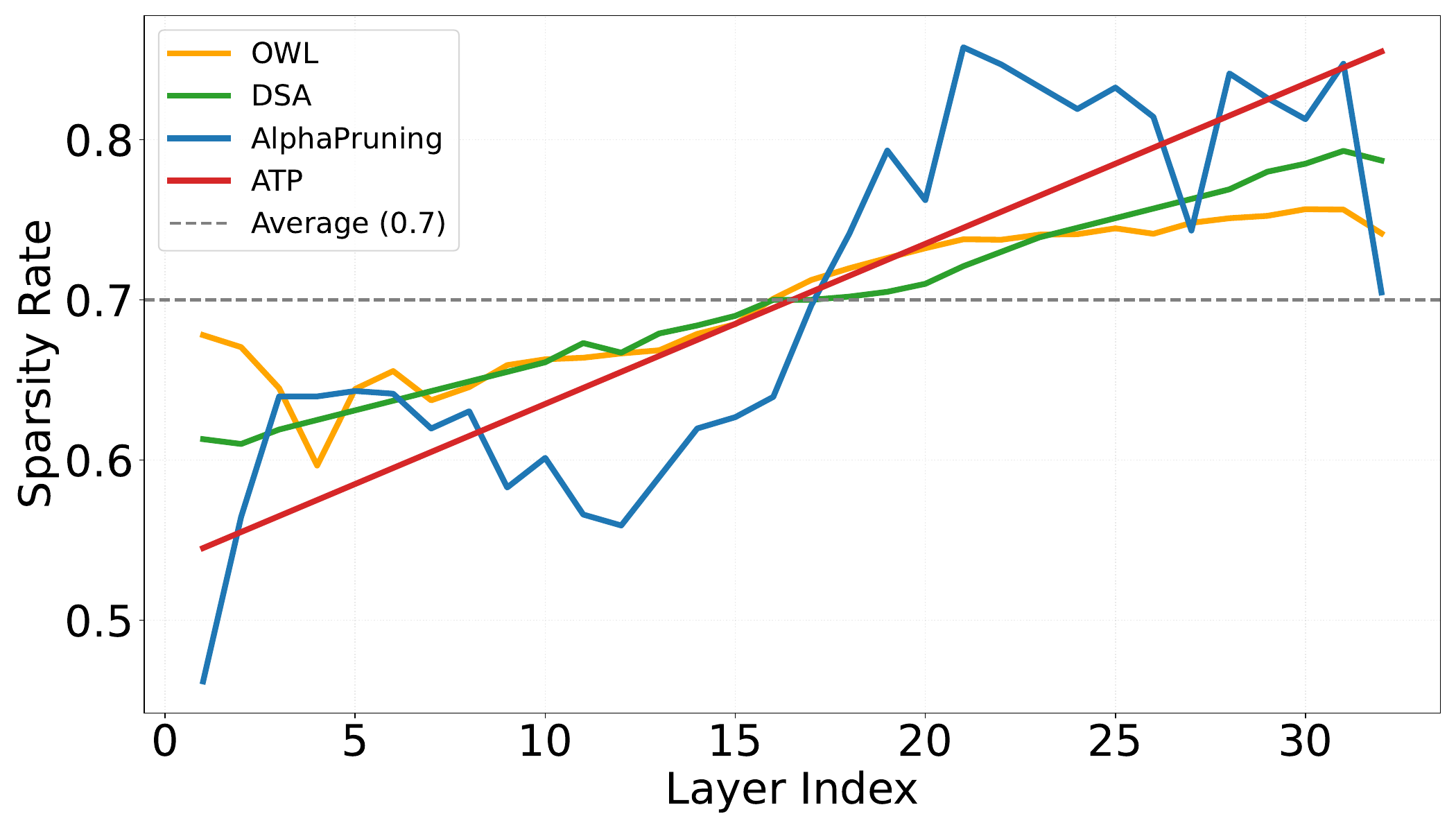}
    \caption{Comparison of layer-wise sparsity rate distribution with other methods.}
    \label{fig:comparison_methods}
\end{figure}

\subsection{Different $\beta$ Settings.}\label{sec:Different_beta}
Figure \ref{fig:different_beta} shows the impact of different \(\beta\) settings on the perplexity of the 70$\%$ sparse LLaMA2-7B model obtained using the Wanda method. We observe that as \(\beta\) increases, the perplexity initially decreases and then rises. Furthermore, the model's perplexity under various \(\beta\) settings remains lower than that of the uniform sparsity rate scheme.
\begin{figure}[h]
    \centering
    \includegraphics[width=0.5\linewidth]{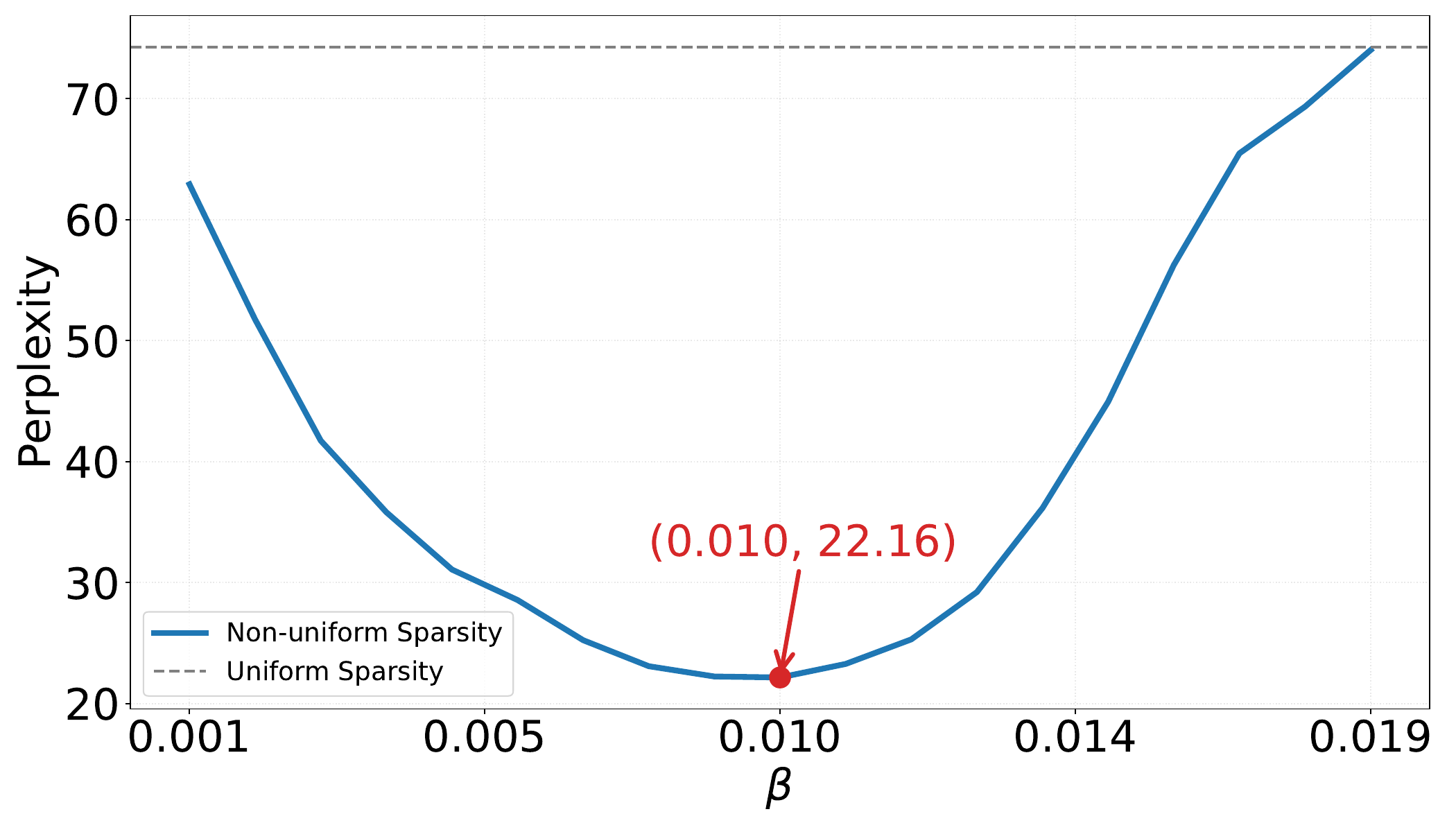}
    \caption{Impact of different \(\beta\) settings on the perplexity of the 70$\%$ sparse LLaMA2-7B.}
    \label{fig:different_beta}
\end{figure}

\section{Detailed Results for Zero-shot Tasks}\label{app:DetailedZero-shotTaskResults}
In this section, we provide a detailed presentation of the performance of each zero-shot task introduced in Sec. \ref{sec:experiments}.
\begin{table*}[h]
  \centering
  \small
  \caption{Zero-shot accuracy results of LLaMA-V1 at 70$\%$ sparsity.
  }\label{tab:zero_shot_llama1}
  \setlength{\tabcolsep}{5.5pt}
  \resizebox{1.0\textwidth}{!}{
  \begin{tabular}{clcccccccc}
  \toprule
Model  &  Method & HellaSwag \hspace{-0.2cm} & Winogrande \hspace{-0.2cm} & BoolQ & OBQA & PIQA & ARC-e & ARC-c & \hspace{-0.2cm}  Mean \\
\midrule 
   \multirow{8}{*}{LLaMA-7B}   & Dense  & 56.92 &  69.93 & 75.05 & 34.40   & 78.67 &75.34 &41.89 & 61.74 \\
  \cmidrule{2-10}
  & SparseGPT & 34.58  &  56.43 & 64.80&16.80 & 64.25 &45.24 &23.12& 43.60\\ 
  &  w. OWL &37.08 &62.35 &66.15 &19.80 & 66.16&48.40 & 26.02& 46.57 \\
   &  w. AlphaPruning & 37.81& 64.01&\bf 68.26 &18.80 &63.60 &46.17& 27.39 &  46.58\\
  \gr \wc  &  \bf w. ATP& \bf37.88  & \bf64.22 & 66.51  & \bf 20.00 & \bf66.17  & \bf49.37  &   \bf27.47  &\bf 47.37 \\ 
   \cmidrule{2-10}
 &   Wanda &28.86  &52.80  &59.69 &14.20 &  57.56 &31.27 &17.75 & 37.45\\ 
   &  w. OWL & 34.89& 58.64& 62.63& 17.60& 64.30&46.97 &24.32 & 44.19 \\
  &  w. DSA &34.68 &59.67 &62.69 &15.40 & 64.09&45.16 & 24.40&43.72  \\
   &  w. AlphaPruning & 36.26&\bf 62.35 &\bf 66.12& 17.20&63.93 &43.90 &25.17 & 44.99 \\
  \gr \wc  &  \bf w. ATP& \bf 37.44 & 61.43 & \bf 65.79 & \bf 20.80 & \bf 67.46 & \bf 50.63 &   \bf 25.68 &\bf 47.03 \\ 
  \midrule
  \multirow{8}{*}{LLaMA-13B}   & Dense &59.94  & 72.77  &77.89 & 33.20   & 79.16 & 77.40 &46.50 & 63.84 \\
  \cmidrule{2-10}
  & SparseGPT & 37.51  & 63.30 &68.78 &20.80 & 67.63 &52.78 &25.17 &48.00\\ 
    &  w. OWL & 40.36&66.22 &66.82 &20.80 &69.86 &57.37 &28.67 & 50.01 \\
   &  w. AlphaPruning &42.43 & 67.80& 67.58&22.00 & 70.01& 55.47&29.10 & 50.63 \\
  \gr \wc  &  \bf w. ATP& \bf 43.31 & \bf67.92 & \bf 69.14 & \bf23.40  & \bf 70.89 & \bf59.97  &   \bf 30.20 &\bf 52.12 \\ 
   \cmidrule{2-10}
 &   Wanda & 31.06 & 54.38 &61.59 &16.20 & 62.68  &42.05 & 17.58& 40.79\\ 
   &  w. OWL &38.57 &63.46 &62.81 &20.40 & 68.61& 57.07&26.37 & 48.18 \\
  &  w. DSA &38.84 &62.04 & 63.03& 23.20&68.98 &58.29 &26.10 & 48.64 \\
   &  w. AlphaPruning &41.04 &64.96 & 64.83& 21.20& 69.01& 59.39&28.24 & 49.81 \\
  \gr \wc  &  \bf w. ATP& \bf43.11  & \bf64.98 & \bf66.64  & \bf25.80  & \bf 69.80 & \bf 59.97 &   \bf30.89  &\bf 51.60 \\ 
  \midrule   
  \multirow{8}{*}{LLaMA-30B}   & Dense & 63.35 &75.69 & 82.69 & 36.00 & 81.01 &80.30 & 52.82 &67.41 \\
  \cmidrule{2-10}
  & SparseGPT  & 44.56 & 69.30 &65.35 &25.80 & 72.42& 65.78&32.25&53.64 \\
   &  w. OWL &46.96 & 72.20&67.95 & 26.80&73.56 & 67.13&35.49 & 55.73 \\
   &  w. AlphaPruning &47.49 &72.30 & 68.96& 30.00&73.50 &67.97 &34.73 & 56.42 \\
  \gr \wc  &  \bf w. ATP& \bf 48.77 & \bf 72.83& \bf 69.28 & \bf30.60  & \bf  73.94& \bf68.81  &   \bf 36.68 &\bf 57.55 \\ 
      \cmidrule{2-10}
  &  Wanda   & 44.23 &67.01 & 66.70 &26.40 & 72.03 & 64.86&32.25  & 53.35 \\
   &  w. OWL & 47.69&69.77 &65.02 & 29.20&73.88 & 68.98& 36.62& 55.88 \\
  &  w. DSA &45.62 & 67.25& 73.15& 27.80& 72.36&68.13 & 32.17& 55.21 \\
   &  w. AlphaPruning & 49.40& 71.27& 63.82&31.00 &73.50  &69.40 &38.31 & 56.67 \\
  \gr \wc  &  \bf w. ATP& \bf50.27  & \bf71.37 & \bf 65.35 & \bf31.40  & \bf 75.41 & \bf 70.62 &   \bf39.42  &\bf57.69  \\ 
  \midrule  
    \multirow{8}{*}{LLaMA-65B}   & Dense  & 64.53&77.27 & 84.89&38.00 & 81.23&81.36 &52.73 &68.57 \\
  \cmidrule{2-10}
  & SparseGPT   &49.98 &74.74 &81.03 &29.20 & 74.76& 72.05&39.51 & 60.18\\
   &  w. OWL &51.25 &74.98 & 81.30& 27.00& 75.68& 68.35&37.71 & 59.47 \\
   &  w. AlphaPruning &52.40 & 74.59& 83.80& 28.42&75.68 & 69.30&38.05 & 60.32 \\
  \gr \wc  &  \bf w. ATP& \bf 53.10 & \bf 75.69& \bf 83.73 & \bf 28.45 & \bf 76.06 & \bf 71.46 &   \bf40.69  &\bf61.31  \\ 
      \cmidrule{2-10}
  &  Wanda   &46.29 &70.79 &77.37 & 27.20&74.21 & 70.66& 37.79&57.76  \\
   &  w. OWL &50.90 & 74.11&78.50 &30.60 &75.68 & 70.70&38.05 & 59.79 \\
  &  w. DSA &48.60 & 73.08& 71.77& 28.80& 75.35& 71.88&38.22 &58.24  \\
   &  w. AlphaPruning &52.55 & 75.06&81.85 &30.40&75.84  & 71.60&40.02 &  61.05\\
  \gr \wc  &  \bf w. ATP& \bf 52.92 & \bf75.70 & \bf 82.73 & \bf31.80  & \bf 75.98 & \bf 73.53 &   \bf  40.78&\bf61.92  \\ 
\bottomrule
\end{tabular}
}
\end{table*}

\begin{table*}[h]
  \centering
  \small
  \caption{Zero-shot accuracy results of LLaMA-V2/V3 at 70$\%$ sparsity.
  }\label{tab:zero_shot_llama23}
  \setlength{\tabcolsep}{5.5pt}
  \resizebox{1.0\textwidth}{!}{
  \begin{tabular}{clcccccccc}
  \toprule
Model  &  Method & HellaSwag \hspace{-0.2cm} & Winogrande \hspace{-0.2cm} & BoolQ & OBQA & PIQA & ARC-e & ARC-c & \hspace{-0.2cm}  Mean \\
\midrule 
  \multirow{8}{*}{LLaMA2-7B}   & Dense &57.17 & 68.90&	77.74&31.40 &78.07	&76.39 & 43.52&61.88\\
  \cmidrule{2-10}
  & SparseGPT   &33.08 &  58.41&  64.89&  17.40 & 62.46 &43.22  &22.01  &43.07\\
     &  w. OWL &36.57 &63.06 &66.94 &21.60 & 63.89&49.33 &24.49 & 46.55 \\
   &  w. AlphaPruning &36.83 &62.19 & 65.93&19.80 & 64.58&49.62 &24.43 & 46.20 \\
  \gr \wc  &  \bf w. ATP& \bf 38.94 & \bf 63.54& \bf69.54  & \bf 21.70 & \bf  68.12& \bf 52.74 &   \bf 25.85 &\bf 48.63 \\ 
      \cmidrule{2-10}
  &  Wanda  & 27.92  & 49.33 & 52.87   & 12.60 &55.33  & 30.60 & 18.69 &35.33\\
    &  w. OWL &31.83 &55.96 & 62.11&16.80 & 61.70& 43.52&20.31 & 41.75 \\
  &  w. DSA&28.54 &50.36 &57.46 &12.00 &57.13 &32.95 & 17.41& 36.55  \\
   &  w. AlphaPruning & 34.56&60.85 & 62.23&18.00 &62.40 &43.27 &22.27 & 43.37 \\
  \gr \wc  &  \bf w. ATP& \bf 36.08 & \bf61.01 & \bf62.39  & \bf 20.40 & \bf 66.81 & \bf 50.76 &   \bf 23.38 &\bf45.83  \\ 
  \midrule 
  \multirow{8}{*}{LLaMA2-13B} & Dense & 60.06 &72.22&80.52 & 35.20 & 79.11 &79.42 & 48.46 &65.00 \\
  \cmidrule{2-10}
  & SparseGPT  & 36.90 & 61.64 &66.02 & 21.00 &67.57 &52.61 &25.94 &47.38\\
     &  w. OWL &39.31 &65.75 &68.04 &22.80 & 67.89 &57.70 &27.82 & 49.90 \\
   &  w. AlphaPruning &41.26 &68.03 & 68.13&24.00 &68.28 &57.15 & 29.18&  50.86\\
  \gr \wc  &  \bf w. ATP& \bf 42.81 & \bf68.09 & \bf 72.91 & \bf24.40  & \bf  69.74& \bf 58.25 &   \bf31.06  &\bf52.46  \\ 
      \cmidrule{2-10}
  &  Wanda   & 29.60 & 51.70 &62.32&13.60 &58.65  & 37.21& 19.11 &  38.88\\
     &  w. OWL &36.31 &60.46 &63.46 &21.80 &67.77 & 55.64&24.91 & 47.19 \\
  &  w. DSA &32.83 & 55.01& 62.41&17.80 & 63.71& 49.87&21.92 &43.36  \\
   &  w. AlphaPruning & 40.28&67.32 & 62.57& 21.60&68.17 &54.46 &29.35& 49.11 \\
  \gr \wc  &  \bf w. ATP& \bf41.44  & \bf67.50 & \bf  74.71& \bf 24.60 & \bf 68.25 & \bf 57.49 &   \bf30.80  &\bf  52.11 \\ 
  \midrule   
   \multirow{8}{*}{LLaMA2-70B}   & Dense  & 66.10&78.06 &83.40	&37.20 &82.21&	82.55	&	54.44& 69.14 \\
  \cmidrule{2-10}
  & SparseGPT   &50.98 &\bf 75.45  & 80.06 &30.00  & 75.24 & 73.57 &  40.61& 60.84\\
    &  w. OWL &51.95 &74.98 &79.25 & 30.40&75.68 & 73.00&40.53 & 60.83 \\
   &  w. AlphaPruning &51.90 &75.06 & 80.40& 29.20&75.68 &74.10 & 40.87& 61.03 \\
  \gr \wc  &  \bf w. ATP& \bf  53.44& \bf76.56 & \bf 80.42 & \bf31.00  & \bf76.71  & \bf75.38  &   \bf 42.23 &\bf62.25  \\ 
      \cmidrule{2-10}
  &  Wanda  & 48.16  & 73.88 & 74.46  & 27.00 &74.86  & 72.69 &38.31 &58.48 \\
    &  w. OWL & 50.20& 74.02&75.01 &28.60 &75.68 &73.02 &38.30 &59.26  \\
  &  w. DSA &48.60 &73.08 &71.77 & 28.80& 75.35&71.88 & 38.22& 58.25 \\
   &  w. AlphaPruning & 51.40& 74.74&73.65 &29.60 & 75.84&72.70 &38.13 & 59.44 \\
  \gr \wc  &  \bf w. ATP& \bf 51.84 & \bf75.53 & \bf78.65  & \bf 29.70 & \bf76.66 & \bf74.83  &   \bf39.16  &\bf 60.91 \\  
   \midrule   
   \multirow{8}{*}{LLaMA3-8B}   & Dense 	& 60.19 & 72.77 & 81.35 & 34.80 &79.71 & 80.09 & 50.43 & 65.62\\
  \cmidrule{2-10}
  & SparseGPT   & 34.26 & 56.75 & 66.51& 16.80 & 63.28 &42.09&21.42 & 43.02\\
     &  w. OWL &36.78 & 58.96&69.54 & 18.20& 65.45& 49.46&24.06 & 46.06 \\
   &  w. AlphaPruning &35.54 &61.56 &71.02 &17.00 & 63.92&46.17 &21.67 & 45.27 \\
  \gr \wc  &  \bf w. ATP& \bf38.19  & \bf63.22 & \bf71.12  & \bf19.00  & \bf66.59  & \bf  50.20&   \bf26.19  &\bf 47.79 \\ 
      \cmidrule{2-10}
 &  Wanda  & 27.36  & 49.96 & 53.33 & 12.00 & 56.04 & 31.86 & 17.41& 35.42 \\
    &  w. OWL & 28.43& 50.43&61.74 &13.00 &57.99 &35.82 &17.58 & 37.85 \\
  &  w. DSA &27.51 & 48.46&54.16 &11.80 & 56.63& 33.03&17.66 & 35.61 \\
   &  w. AlphaPruning &27.82 & 51.85& 56.42&13.40 & 56.47& 34.97& 17.32&36.89 \\
  \gr \wc  &  \bf w. ATP& \bf 31.46 & \bf54.93 & \bf61.79  & \bf16.40  & \bf 62.18 & \bf41.79  &   \bf20.39  &\bf41.28  \\ 
\bottomrule
\end{tabular}
}
\end{table*}

\begin{table*}[h]
  \centering
  \small
  \caption{Zero-shot accuracy results of LLaMA-V1/V2 at 50$\%$ sparsity.
  }\label{tab:zero_shot_50}
  \setlength{\tabcolsep}{5.5pt}
  \resizebox{1.0\textwidth}{!}{
  \begin{tabular}{clcccccccc}
  \toprule
Model  &  Method & HellaSwag \hspace{-0.2cm} & Winogrande \hspace{-0.2cm} & BoolQ & OBQA & PIQA & ARC-e & ARC-c & \hspace{-0.2cm}  Mean \\
\midrule 
  \multirow{7}{*}{LLaMA-7B}   & Dense &76.18 & 70.09&75.11&44.40 &79.16 & 72.98 & 44.71 &66.09 \\
  \cmidrule{2-10}
  &  Wanda  &68.92 &66.38 &70.70&39.00 &74.76 &61.74  &38.91  &60.06 \\
    & w. OWL & 70.06&66.85 &71.44&39.80 &75.52 &69.44 & 40.36 & 61.92\\
  &  w. DSA & 69.51&67.95 &71.31&39.40 &75.46 &69.65 & 40.01 & 61.90 \\
  &  w. ALS & 69.59& 66.30&73.70 & 38.60&\bf77.26 &65.66 &40.02 &61.59   \\
   &  w. AlphaPruning  & 69.60&67.64 &73.33&39.20 & 75.57& 69.65& 38.39 &61.91 \\
  \gr \wc  &  \bf w. ATP& \bf70.18  & \bf68.06 & \bf73.92 &\bf 39.90  & 76.51 & \bf69.74 &   \bf 40.70 &\bf62.72  \\ 
  \midrule 
  \multirow{7}{*}{LLaMA-13B}   & Dense & 79.06& 72.77&77.89&44.80 &80.14 &74.79  & 47.78 & 68.18\\
  \cmidrule{2-10}
  &  Wanda  &74.13 &71.51 &75.96&43.60&77.91 &69.65&43.77&65.22\\
    & w. OWL & 74.82&72.53 &76.39&43.60 &77.37 & 73.32& 43.34 &65.91 \\
  &  w. DSA & 74.02& 70.79&76.05&43.20 &76.98 &72.60 & 44.19 & 65.40 \\
  &  w. ALS & 74.34&71.35 &75.17&43.00 & 77.37& 69.70 &44.45  &65.05 \\
   &  w. AlphaPruning  &74.58 & 72.77&76.63& 44.00& 76.82&74.20 & 44.36 &66.19 \\
  \gr \wc  &  \bf w. ATP& \bf 74.86 & \bf72.78 & \bf76.47 & \bf 44.20& \bf 77.46& \bf74.36 &   \bf 44.60 &\bf 66.39 \\ 
  \midrule 
  \multirow{7}{*}{LLaMA2-7B}   & Dense & 75.98&69.06 &77.74 & 44.20&	79.11& 74.49&46.25 & 66.69\\
  \cmidrule{2-10}
  &  Wanda &68.76 & 67.32& 75.78&41.40 &	76.99&69.23 &41.72 & 63.03 \\
    & w. OWL &70.79 &67.32&75.96&42.80 &76.33& 72.01&41.97&63.88 \\
  &  w. DSA&70.90 &66.45 &76.42 &43.00 &76.22& 71.42&42.83 & 63.89  \\
  &  w. ALS& 70.75& 67.80&75.47 &\bf 44.80 &	77.10&69.61 &42.32 &  64.12 \\
   &  w. AlphaPruning & 70.89&67.48 &76.63 &43.00 &76.33	&72.22 &42.83 &64.20   \\
  \gr \wc  &  \bf w. ATP& \bf70.99  & \bf 67.84& \bf76.73 &  43.70 & \bf76.44  & \bf72.90 &   \bf42.85  &\bf64.49  \\ 
  \midrule 
  \multirow{7}{*}{LLaMA2-13B}   & Dense &79.39 &72.38 & 80.58& 45.20&80.52 &77.53 &49.15  &69.25 \\
  \cmidrule{2-10}
  &  Wanda  & 75.02&69.39 &80.34 &44.10 &78.13 & 70.37 &42.76 &65.73 \\
    & w. OWL  & 76.11&71.19 &81.65 &45.40 &78.67 & 76.85&46.24 &68.02 \\
  &  w. DSA &75.86 &71.03 & 80.83& 45.00 &78.62 &76.22 &45.99 &67.65  \\
  &  w. ALS &75.67 &72.06 & 81.35&\bf 45.80 &78.51 &70.33 & 46.08 &67.11\\
   &  w. AlphaPruning  & 76.19& 71.58& 80.97&45.00 &78.34 &76.38 &46.16 &  67.80\\
  \gr \wc  &  \bf w. ATP& \bf76.26  & \bf72.09 & \bf81.66 &   45.20& \bf 78.74& \bf76.86 &   \bf46.42  &\bf68.18  \\ 
\bottomrule
\end{tabular}
}
\end{table*}

\begin{table*}[h]
  \centering
  \small
  \caption{Zero-shot accuracy results of more LLM architectures at 70$\%$ sparsity.
  }\label{tab:zero_shot_more_LLM}
  \setlength{\tabcolsep}{5.5pt}
  \resizebox{1.0\textwidth}{!}{
  \begin{tabular}{clcccccccc}
  \toprule
Model  &  Method & HellaSwag \hspace{-0.2cm} & Winogrande \hspace{-0.2cm} & BoolQ & OBQA & PIQA & ARC-e & ARC-c & \hspace{-0.2cm}  Mean \\
\midrule 
   \multirow{3}{*}{LLaMA3.1-8B}   & Dense & 59.98 &73.32  &82.05 &33.20 &79.98& 81.57&51.45 &65.93 \\
  \cmidrule{2-10}
 &  Wanda   &27.43 &48.70 &57.71 & 13.60& 55.01&31.86 &18.43 & 36.10\\
  \gr \wc  &  \bf w. ATP& \bf 31.81 & \bf55.49 & \bf 62.08& \bf15.60& \bf 62.68& \bf  42.63&   \bf 20.82 &\bf 41.59\\ 
  \midrule 
   \multirow{3}{*}{OPT-13B}   & Dense  & 52.43& 65.04&	65.93& 27.20&75.84 &	67.13&32.94 &55.22\\
  \cmidrule{2-10}
 &  Wanda  & 34.36 & 55.09 & 55.02 & 15.60 & 62.89 &43.73 & 23.89 &41.51 \\
  \gr \wc  &  \bf w. ATP& \bf 36.47 & \bf 58.09& \bf62.17 & \bf18.20 & \bf65.56 & \bf 46.63 &   \bf 24.91 &\bf44.58 \\ 
    \midrule 
   \multirow{3}{*}{Vicuna-13B}   & Dense  & 59.64&71.59 &	85.26 &36.80 &79.00 & 78.66&	47.78& 65.53\\
  \cmidrule{2-10}
 &  Wanda  & 31.84 &54.70  & 62.78 & 16.40 & 61.75 & 44.87 &22.10  &42.06\\
  \gr \wc  &  \bf w. ATP& \bf 41.47 & \bf63.85 & \bf76.70 & \bf25.00 & \bf 69.04& \bf 61.95 &   \bf34.30  &\bf53.19 \\ 
      \midrule 
   \multirow{3}{*}{Qwen2.5-7B}   & Dense  &60.01 & 72.85&84.65&33.20 & 78.73&80.43 &	47.70& 65.36\\
  \cmidrule{2-10}
 &  Wanda  &30.68 &51.93 & 61.96&15.20 & 61.59&46.34 &20.05	&41.10 \\
  \gr \wc  &  \bf w. ATP& \bf32.84  & \bf56.59 & \bf62.14 & \bf16.40 & \bf63.44 & \bf 46.42 &   \bf21.93  &\bf 43.82 \\ 
    \midrule 
   \multirow{3}{*}{Mistral-7B}   & Dense  & 61.21& 73.88& 83.64&32.60 & 80.58& 80.85&50.43&66.17 \\
  \cmidrule{2-10}
 &  Wanda  &28.86 & 51.07&60.03 &12.60 &57.56 &34.60 &18.69	&37.62 \\
  \gr \wc  &  \bf w. ATP& \bf34.44  & \bf58.96 & \bf 62.20& \bf15.60 & \bf63.93 & \bf 42.68 &   \bf 21.50 &\bf42.76 \\ 
\bottomrule
\end{tabular}
}
\end{table*}

\begin{table*}[h]
  \centering
  \small
  \caption{Zero-shot accuracy results of more post-training sparsity methods at 70$\%$ sparsity.
  }\label{tab:zero_shot_more_post-training}
  \setlength{\tabcolsep}{5.5pt}
  \resizebox{1.0\textwidth}{!}{
  \begin{tabular}{clcccccccc}
  \toprule
Model  &  Method & HellaSwag \hspace{-0.2cm} & Winogrande \hspace{-0.2cm} & BoolQ & OBQA & PIQA & ARC-e & ARC-c & \hspace{-0.2cm}  Mean \\
\midrule 
   \multirow{3}{*}{LLaMA2-7B} & Dense  &57.17 & 68.90&	77.74&31.40 &78.07	&76.39 & 43.52&61.88 \\
  \cmidrule{2-10}
 &  DSnoT   & 27.80&51.78 &49.66 &12.60 & 55.66&30.89 & 17.41& 35.11\\
\gr \wc  &  \bf w. ATP& \bf 34.98 & \bf58.01 & \bf62.23 & \bf18.20 & \bf65.78 & \bf 50.42 &   \bf 21.93 &\bf 44.51\\ 
    \cmidrule{2-10}
 &  Pruner-Zero&27.56&50.99&41.93&13.00&56.90&34.47&18.60&34.78\\
  \gr \wc  &  \bf w. ATP& \bf 35.97 & \bf57.93 & \bf64.46 & \bf18.60 & \bf 64.15& \bf 48.27 &   \bf23.98  &\bf 44.77\\ 
      \cmidrule{2-10}
 &  ALPS&38.35&61.96&64.59&22.20&66.82&48.37&24.95&46.75\\
 \gr \wc  &  \bf w. ATP& \bf 41.58& \bf 64.25 & \bf64.78 & \bf 24.20 & \bf68.34 & \bf 56.65 &   \bf 28.92 &\bf 49.82\\ 
\bottomrule
\end{tabular}
}
\end{table*}
\end{document}